\newcommand{\abovespace}{\,}
\newcommand{\belowspace}{l\,}
\theoremstyle{break}
\newtheorem{lem}{Lemma}
\newtheorem{them}{Theorem}
\newtheorem{asmp}{Assumption}
\newtheorem{defi}{Definition}
\newtheorem{cor}{Corollary}
\newcommand{\argmax}{\operatornamewithlimits{argmax}}
\title{Doubly Decomposing\\Nonparametric Tensor Regression}
\author[1]{Masaaki Imaizumi}
\author[2]{Kohei Hayashi}
\affil[1]{University of Tokyo}
\affil[2]{National Institute of Informatics}
\date{}
\begin{document}
\thispagestyle{empty}

\maketitle

%\newpage
%\pagenumbering{roman}
\setcounter{page}{1}

%==== Main Part

%  In this paper, we study the tensor regression problems and
%  propose its nonparametric extension, which we break a nonlinear
%  relationship on high-dimensional tensors into simple local functions
%  with incorporating low-rank tensor decompositions.  Compared with
%  the naive nonparametric approach, our formulation drastically
%  improves the convergence rate of estimation while maintaining the
%  consistency to the same function class under specific conditions. To
%  estimate the regression function, we provide an estimation method;
%  the Bayesian estimator with the Gaussian process prior. Experimental results confirm its
%  theoretical properties and demonstrate the high performance for
%  predicting a summary statistic of a real complex network.

\begin{abstract}
  Nonparametric extension of tensor regression is proposed.
  Nonlinearity in a high-dimensional tensor space is broken into
  simple local functions by incorporating low-rank tensor
  decomposition. Compared to naive nonparametric approaches, our
  formulation considerably improves the convergence rate of estimation
  while maintaining consistency with the same function class under
  specific conditions. To estimate local functions, we develop a
  Bayesian estimator with the Gaussian process prior. Experimental
  results show its theoretical properties and high performance in
  terms of predicting a summary statistic of a real complex network.
\end{abstract}

%%% Local Variables:
%%% mode: latex
%%% TeX-master: "sample_paper.tex"
%%% End:

\section{Introduction}

%Regression analysis is the most standard tool to solve supervised
%problems. Namely, given a set of a respondent variable (target) and a
%covariate (input), regression finds a relation between them from some
%function space. For example, the linear regression represents the
%relation by a linear function, i.e., the weighted sum of a
%covariate. The nonparametric regression, in contrast, allows to use
%more flexible forms, e.g., a smooth nonlinear function.  The linear
%regression is more robust against the noise, while the nonparametric
%regression can represent more complex relation, which leads the
%bias-variance tradeoff.

\emph{Tensor regression} deals with matrices or tensors (i.e.,
multi-dimensional arrays) as covariates (inputs) to predict scalar
responses
(outputs)~\cite{wang2014,hung2013,zhao2014,zhou2013,tomioka2007,suzuki2015,guhaniyogi2015}. 
Suppose we have a set of $n$ observations $D_n =
\{(Y_i,X_i)\}_{i=1}^n$; $Y_i \in \mathcal{Y}$ is a respondent variable
in the space $\mathcal{Y} \subset \mathbb{R}$ and $X_i \in
\mathcal{X}$ is a covariate with $K$th-order tensor form in the space
$\mathcal{X} \subset \mathbb{R}^{I_1 \times \ldots \times I_K}$, where
$I_k$ is the dimensionality of order $k$. With the above setting, we
consider the regression problem of learning a function $f :
\mathcal{X} \rightarrow \mathcal{Y}$ as
\begin{align}\label{eq:TR}
	Y_i = f(X_i) + u_i,
\end{align}
where $u_i$ is zero-mean Gaussian noise with variance $\sigma^2$.
Such problems can be found in several applications. For example,
studies on brain-computer interfaces attempt to predict human intentions
(e.g., determining whether a subject imagines finger tapping) from
brain activities. Electroencephalography (EEG) measures brain activities as electric signals at several points (channels) on the scalp,
giving \textit{channel $\times$ time} matrices as
covariates. Functional magnetic resonance imaging captures blood flow
in the brain as three-dimensional voxels, giving \textit{X-axis
  $\times$ Y-axis $\times$ Z-axis $\times$ time} tensors.

There are primarily two approaches to the tensor regression problem. One is assuming
linearity to $f$ as
\begin{align}
	f(X_i) = \langle B,X_i \rangle, \label{lin_model}
\end{align}
where $B \in \mathbb{R}^{I_1 \times \ldots \times I_K}$ is a weight
parameter with the same dimensionalities as $X$ and $\langle B,X
\rangle = \sum_{j_1,\ldots,j_K = 1} ^{I_1,\ldots,I_K} B_{j_1 \ldots
  j_K} X_{j_1 \ldots j_K}$ denotes the inner product. Since $B$ is
very high-dimensional in general, several authors have incorporated a
low-rank structure to
$B$~\cite{dyrholm2007,zhou2013,hung2013,wang2014,suzuki2015,guhaniyogi2015}. We
collectively refer to the linear models~\eqref{lin_model} with
low-rank $B$ as \emph{tensor linear regression (TLR)}.
%\footnote{The notion of ``low-rank'' tensors is introduced in
%  Section~\ref{sec:bg} $B$ as \emph{tensor linear regression (TLR)}}.
As an alternative, a nonparametric approach has been
proposed~\cite{zhao2013,hou2015a}. When $f(X)$ belongs to a proper
functional space, with an appropriately choosing kernel function, the nonparametric method can
estimate $f$ perfectly even if $f$ is nonlinear.

In terms of both theoretical and practical aspects, the \emph{bias-variance tradeoff} is a central
issue. In TLR, the function class that the model can represent is
critically restricted due to its linearity and the low-rank
constraint, implying that the variance error is low but the bias error
is high if the true function is either nonlinear or full rank.
In contrast, the nonparametric method can represent a wide range of
functions and the bias error can be close to zero. However, at the
expense of the flexibility, the variance error will be high due to the
high dimensionality, the notorious nature of tensors.
Generally, the optimal convergence rate of nonparametric models
is given by 
\begin{align}
	O(n^{-\beta/(2\beta + d)}), \label{general-rate}
\end{align}
which is dominated by the input dimensionality $d$ and the smoothness of the true function $\beta$~\citep{tsybakov2008}. For
tensor regression, $d$ is the total number of $X$'s elements, i.e.,
$\prod_k I_k$. When each dimensionality is roughly the same as $
I_1\simeq\dots\simeq I_K$, $d=O(I_1^K)$, which significantly worsens the rate, and hinders application to even moderate-sized problems.
%
%The convergence
%rate of TGP is $O(n^{-\beta/(2\beta + \prod_k I_k)})$~\comm{kohei:
%  citation needed}, where $\beta$ is a
%constant depending on the smoothness of the true function. 
%

In this paper, to overcome the curse of dimensionality, we propose
\textit{additive-multiplicative nonparametric regression (AMNR)}, a
new class of nonparametric tensor regression. Intuitively, AMNR
constructs $f$ as the sum of local functions taking the component of a
rank-one tensor as inputs. In this approach, functional space and the input space are concurrently decomposed.
This ``double decomposition'' simultaneously reduces model complexity and the effect of noise.
For estimation, we propose a Bayes estimator with the Gaussian Process (GP) prior.
The following theoretical results highlight the desirable properties of AMNR.
Under some conditions,
\begin{itemize}[topsep=0pt,itemsep=-1ex,partopsep=1ex,parsep=1ex]
\item AMNR represents the same function class as the general
  nonparametric model, while
\item the convergence rate~\eqref{general-rate} is improved as $d=I_{k'}$ 
  ($k'=\argmax_kI_k$), which is $\prod_{k\not=k'}I_k$ times better.
\end{itemize}
We verify the theoretical convergence rate by simulation and
demonstrate the empirical performance for real application in network
science.

\section{AMNR: Additive-Multiplicative Nonparametric Regression}\label{sec:AMNR}

First, we introduce the basic notion of tensor
decomposition.
With a finite positive integer $R^*$, the
\emph{CANDECOMP/PARAFAC (CP) decomposition}~\cite{harshman1970,carroll1970}
of $X \in \mathcal{X}$ is defined as
\begin{align}
	X = \sum_{r=1}^{R^*} \lambda_r x_{r}^{(1)} \otimes x_{r}^{(2)} \otimes \ldots \otimes x_{r}^{(K)}, \label{CP_decomp}
\end{align}
where $\otimes$ denotes the tensor product, $x_{r}^{(k)} \in
\mathcal{X}^{(k)}$ is a unit vector in a set $\mathcal{X}^{(k)} :=
\{v | v \in \mathbb{R}^{I_k}, \|v\|=1\}$, and $\lambda_r$ is the scale of
$\{x_{r}^{(1)},\dots,x_{r}^{(K)}\}$ satisfying $\lambda_r \geq
\lambda_{r'}$ for all $r > r'$. In this paper, $R^*$ is the
rank of $X$.
%Note that, in \eqref{CP_decomp},
%each element of $X$ is written as $[X]_{j_1,\ldots,j_K} =
%\sum_{r=1}^{R^*} \prod_{k=1}^{K} [x_{r}^{(k)}]_{j_k}$. 
%

A similar relation holds for functions. Here,
$\mathcal{W}^{\beta}(\mathcal{X})$ denotes a Sobolev space, which is $\beta$
times differentiable functions with support $\mathcal{X}$.
Let $g \in \mathcal{W}^\beta(S)$ be such a function. If $S$ is given
by the direct product of multiple supports as $S=S_1\times\dots\times
S_J$, there exists a (possibly infinite) set of local functions
$\{g_m^{(j)} \in \mathcal{W}^{\beta}(S_j)\}_{m}$ satisfying
\begin{align}\label{eq:sum-pro-g}
  g = \sum_m \prod_j g_m^{(j)}
\end{align}
for any $g$~\citep[Example 4.40]{hackbusch2012}.
%Here, the number of local functions represents the complexity of $g$.
This relation can be seen as an extension of tensor decomposition with infinite
dimensionalities.

\subsection{The Model}

For brevity, we start with the case wherein $X$ is rank one. Let $X = \bigotimes_k x_k := x_1\otimes\dots\otimes x_K$ with vectors $\{x_k \in \mathcal{X}^{(k)}\}_{k=1}^K$ and $f\in\mathcal{W}^\beta(\bigotimes_k \mathcal{X}^{(k)})$
be a function on a rank one tensor. For any $f$, we can construct
$\tilde{f}(x_1,\ldots,x_K) \in \mathcal{W}^{\beta}(
\mathcal{X}^{(1)} \times \ldots \times \mathcal{X}^{(k)})$ such that $\tilde{f}(x_1,\ldots,x_K) = f(X)$ using function composition as
$\tilde{f}=f\circ h$ with $h:(x_1,\dots,x_K)\mapsto\bigotimes_k x_K$.
Then, using \eqref{eq:sum-pro-g}, $f$ is decomposed into a set of local
functions $\{f_m^{(k)} \in \mathcal{W}^{\beta}(\mathcal{X}^{(k)})\}_m$ as:
\begin{align}
	f(X) = \tilde{f} (x_1,\ldots,x_K)= \sum_{m=1}^{M^*} \prod_{k=1}^K f_m^{(k)}(x^{(k)}), \label{the_model_rank1}
\end{align}
where $M^*$ represents the complexity of $f$ (i.e., the ``rank'' of
the model).

With CP decomposition, \eqref{the_model_rank1} is amenable to extend for $X \in \mathcal{X}$ having a higher rank.  For $R^* \geq 1$, we define AMNR as
follows:
\begin{align}
	f^{\mathrm{AMNR}}(X) := \sum_{m=1}^{M^*} \sum_{r=1}^{R^*} \lambda_{r} \prod_{k=1}^K f_m^{(k)}(x_{r}^{(k)}). \label{the_model}
\end{align}
Aside from the summation with respect to $m$, AMNR~\eqref{the_model}
is very similar to CP decomposition~\eqref{CP_decomp} in terms of
that it takes summation over ranks and multiplication over orders. In
addition, as $\lambda_r$ indicates the importance of component $r$ in CP decomposition, it controls how component $r$ contributes to the final output in AMNR.
Note that, for $R^* > 1$, equality between $f^{\mathrm{AMNR}}$ and $f \in
\mathcal{W}^{\beta}(\mathcal{X})$ does not hold in general; see Section~\ref{sec:theory}.

%%% Local Variables:
%%% mode: latex
%%% TeX-master: "example_paper.tex"
%%% End:

\section{Truncated GP Estimator}
%In this section, we propose a robust Bayesian estimation method for
%the local functions $\mathfrak{F} $ with the GP prior.

\subsection{Truncation of $M^*$ and $R^*$}
To construct AMNR \eqref{the_model}, we must know $M^*$. However, this is unrealistic because we do not know the true function. More crucially, $M^*$ can be infinite, and in such a case the exact estimation is computationally infeasible. We avoid these problems using predefined $M<\infty$ rather than $M^*$ and ignore the contribution from $\{f_m^{(k)} : m>M\}$. This may increase the model bias; however, it decreases the variance of estimation. We discuss how to determine $M$ in Section~\ref{theory_est_without}.

For $R^*$, we adopt the same strategy as $M^*$, i.e., we prepare some
$R<R^*$ and approximate $X$ as a rank- $R$ tensor. Because this
approximation reduces some information in X, the prediction
performance may degrade.  However, if $R$ is not too small, this
preprocessing is justifiable for the following reasons. First, this
approximation possibly removes the noise in $X$. In real data such as
EEG data, $X$ often includes observation noise that hinders the
prediction performance. However, if the power of the noise is
sufficiently small, the low-rank approximation discards the noise as
the residual and enhances the robustness of the model. In addition,
even if the approximation discards some intrinsic information of $X$,
its negative effects could be limited because $\lambda$s of the
discarded components are also small.

\subsection{Estimation method and algorithm}

For each local function $f_m^{(k)}$, consider the GP prior
$GP(f_{m}^{(k)})$, which is represented as multivariate Gaussian
distribution $\mathcal{N}(0_{Rn}, K_m^{(k)})$ where $0_{Rn}$ is the
zero element vector of size $Rn$ and $K_m^{(k)}$ is a kernel Gram
matrix of size $Rn \times Rn$. The prior distribution of the local functions
$\mathfrak{F}:= \{f_{m}^{(k)}\}_{m,k}$ is then given by:
\begin{align*}
	\pi(\mathfrak{F}) = \prod_{m=1}^M \prod_{k=1}^K GP(f_{m}^{(k)}).
\end{align*}
From the prior $\pi(\mathfrak{F})$ and the likelihood $\prod_i
N(Y_i|f(X_i),\sigma^2)$, Bayes' rule yields the posterior
distribution:
\begin{align}
	&\pi(\mathfrak{F}|D_n) \notag \\
	&= \frac{\exp(-\sum_{i=1}^n (Y_i - G[\mathfrak{F}](X_i))^2/\sigma)}{\int \exp(-\sum_{i=1}^n (Y_i - G[\tilde{\mathfrak{F}}](X_i))^2/\sigma) \pi(\tilde{\mathfrak{F}}) d\tilde{\mathfrak{F}}} \pi(\mathfrak{F}), \label{GP_post} 
\end{align}
where $G[\mathfrak{F}](X_i) =\sum_{m=1}^{M}\sum_{r=1}^{R} \lambda_{r,i} \prod_{k=1}^K f_m^{(k)}(x_{r,i}^{(k)})$.
$\tilde{\mathfrak{F}} = \{\tilde{f}_{m}^{(k)}\}_{m,k}$ are dummy variables for the integral.
We use the posterior mean as the Bayesian estimator of AMNR:
\begin{align}
  \hat{f}_n = \int \sum_{m=1}^{M}\sum_{r=1}^{R} \lambda_{r,i}
  \prod_{k=1}^K f_m^{(k)} d\pi(\mathfrak{F}|D_n)
  d\mathfrak{F}. \label{GP_est}
\end{align}

To obtain predictions with new inputs, we derive the mean of the
predictive distribution in a similar manner.

Since the integrals in the above derivations have no analytical
solution, we compute them numerically by sampling. The details of the
entire procedure are summarized as follows. Note that $Q$ denotes the
number of random samples.
\begin{itemize}
\item \textbf{Step 1: CP decomposition of input tensors}\\
  With the dataset $D_n$, apply rank-$R$ CP decomposition to $X_i$
  and obtain $\{\lambda_{r,i}\}$ and $\{x_{r,i}^{(k)}\}$ for $i =
  1,\ldots,n$.
\item \textbf{Step 2: Construction of the GP prior distribution $\pi(\mathfrak{F})$}\\
%  For all $k= 1,\ldots,K$ and $m = 1,\ldots,M$, construct the GP prior
%  distribution of $f_m^{(k)}$ by the Monte Carlo approximation. 
  Construct a kernel Gram matrix $K_m^{(k)}$ from $\{x_r^{(k)}\}$
  for each $m$ and $k$, and obtain random samples of the multivariate Gaussian
  distribution $\mathcal{N}(0_{Rn}, K_m^{(k)})$. For each sampling $q
  = 1,\ldots,Q$, obtain a value $f_m^{(k)}(x_{r,i}^{(k)})$ for each
  $r,m,k$, and $i = 1,\ldots,n$. 
\item \textbf{Step 3: Computation of likelihood}\\
%  By evaluating the likelihood, we can obtain the approximated
%  posterior distribution. 
  To obtain the likelihood, calculate $\sum_m \sum_r \lambda_r \prod_k
  f_m^{(k)}(x_{r,i}^{(k)})$ for each sampling $q$ and obtain the
  distribution by (\ref{GP_post}). Obtain the Bayesian
  estimator $\hat{f}$ and select the hyperparameters (optional).
\item \textbf{Step 4: Prediction with the predictive distribution}\\
  Given a new input $X'$, compute CP decomposition and obtain
  ${\lambda'}_r$ and $\{{x'}_{r}^{(k)}\}_{r,k}$. Then, sample
  $f_m^{(k)}({x'}_{r}^{(k)})$ from the prior for each $q$. 
%  This value is obtained from
%  prior distribution, and its value is generated from multivariate Gaussian distribution
%  $\mathcal{N}(0_{R},{K'}_m^{(k)})$ which is conditional on
%  $f_m^{(k)}(x_{r,i}^{(k)})$ of sample $q$. 
  By multiplying the likelihood calculated in Step 3, derive the
  predictive distribution of $\sum_m \sum_r \lambda_r \prod_k
  f_m^{(k)}({x'}_r^{(k)})$ and obtain its expectation with respect to
  $q$.
\end{itemize}

\paragraph{Remark}
Although CP decomposition is not unique up to sign
permutation, our model estimation is not affected by this. 
For example, tensor $X$ with $R^* = 1$ and $K=3$ has two
equivalent decompositions: (A) $x_1\otimes x_2 \otimes x_3$ and (B) $x_1 \otimes
(- x_2)\otimes (- x_3)$. If training data only contains pattern (A),
prediction for pattern (B) does not make sense. However, such a case is
pathological. Indeed, if necessary, we
can completely avoid the problem by flipping the sign of $x_1,x_2$, and $x_3$
at random while maintaining the original sign of $X$. Although the sign
flipping decreases the effective sample size, it is absorbed as a constant term and the convergence rate is
not affected.

%For decomposition of $f(X)$ in \eqref{the_model}, the exist several alternatives of the local functions $\{f_m^{(k)}\}_{m,k}$. However, we does not estimate the set $\{f_m^{(k)}\}_{m,k}$ but to estimate $f(X)$, and $f(X)$ is unique in its functional space. Thus it does not matter.

%we consistently estimate the unique solution of $f(X)$. 
%This is because that purpose of our method is not to estimate each $f_m^{(k)}$, but to estimate $f(X) = \sum_m \sum_r \prod_k f_m^{(k)}(x_r^{(k)})$. We evaluate the likelihood of $f(X)$ through the decompositions, thus we can consistently estimate $f(X)$ without the non-identifiability of the decompositions.

%(Remind the literature of estimating Gaussian mixture model $f(z) = \sum_j \pi_j \phi_j(z)$ with component $\{\phi_j(x)\}_j$ and mixture ratio $\{\pi_j\}_j$. Each component $\phi_j(z)$ is not identifiable, but we can consistently estimate $f(z)$).

%Although both methods have the
%$n^3$ term, which needs for taking the inverse of the Gram matrix,
%$M(nR)^3$ is smaller than $\prod_k I_k$.

%%% Local Variables:
%%% mode: latex
%%% TeX-master: "example_paper.tex"
%%% End:

\section{Theoretical Analysis} \label{sec:theory}

Our main interest here is the asymptotic behavior of distance between the true function that generates data and an estimator \eqref{GP_est}. Preliminarily, let $f^0\in\mathcal{W}^{\beta}(\mathcal{X})$ be the true function
and $\hat{f}_n$ be the estimator of $f^0$.
To analyze the distance in more depth, we introduce the notion of \emph{rank additivity}\footnote{This type of additivity is often assumed in multivariate and additive model analysis \cite{hastie1990,ravikumar2009}.} for functions, which is assumed implicitly when we extend \eqref{the_model_rank1} to \eqref{the_model}.

\begin{defi}[Rank Additivity]\label{asmp:add_sep}
	A function $f : \mathcal{X} \rightarrow \mathcal{Y}$ is rank additive if
	\begin{align*}
		 f\left( \sum_{r=1}^{R^*} \bar{x}_r \right) = \sum_{r=1}^{R^*} f ( \bar{x}_r),
	\end{align*}
        where $\bar{x}_r := \lambda_r x_{r}^{(1)} \otimes \ldots \otimes x_{r}^{(K)}$.
\end{defi}
Letting $f^*$ be a projection of $f^0$ onto the Sobolev space $f \in \mathcal{W}^{\beta}$ satisfying rank additivity, the distance is bounded above as 
\begin{align}
	\|f^0 - \hat{f}_n\| \leq \|f^0 - f^*\| + \|f^* - \hat{f}_n\|. \label{decomp_bound}
\end{align}
Unfortunately, the first term $\|f^0-f^*\|$ is difficult to evaluate, aside from a few exceptions; if $R^*>0$ or $f^0$ is rank additive, $\|f^0-f^*\|=0$.

Therefore, we focus on the rest term $\|f^*-\hat{f}_n\|$. By definition, $f^*$ is rank additive and the functional tensor decomposition \eqref{the_model_rank1} guarantees that $f^*$ is decomposed as the AMNR form \eqref{the_model} with some $M^*$. Here, the behavior of the distance strongly depends on $M^*$. We consider the following two cases: (i) $M^*$ is finite and (ii) $M^*$ is infinite. In case (i), the consistency of $\hat f_n$ to $f^*$ is shown with an explicit convergence rate (Theorem \ref{thm:converge}). More surprisingly, the consistency also holds in case (ii) with a mild assumption (Theorem \ref{thm:high_model}).

Figure~\ref{space} illustrates the relations of these functions and the functional space.
The rectangular areas are the classes of functions represented by AMNR
with small $M^*$, AMNR with large $M^*$, and Sobolev space
$\mathcal{W}^{\beta}$ with rank additivity.

\begin{figure}[tb]
  \begin{center}
  \fbox{
  \includegraphics[width=60mm]{./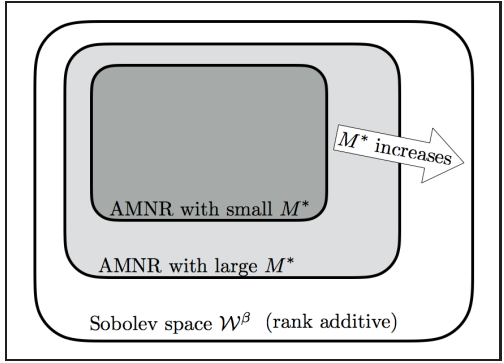}
  }
  \end{center}
  \caption{Functional space and the effect of $M^*$.}
  \label{space}
\end{figure}

Note that the formal assumptions and proofs of this section
are shown in supplementary material.

\subsection{Estimation with Finite $M^*$} \label{theory_est}

The consistency of Bayesian nonparametric estimators is evaluated in terms of posterior consistency
\cite{GGvdV2000,GvdV2007,vdVvZ2008}.  Here, we follow the same strategy. Let $\|f\|_n^2 :=
\frac{1}{n}\sum_{i=1}^n f(x_i)^2$ be the empirical norm. We
define $\epsilon_n^{(k)}$ as the contraction rate of the estimator of
local function $f^{(k)}$, which evaluates the probability mass of the
GP around the true function.  Note that the order of
$\epsilon_n^{(k)}$ depends on the covariance kernel function of the GP
prior, in which the optimal rate of $\epsilon_n^{(k)}$ is
given by \eqref{general-rate} with $d=I_k$.
For brevity, we suppose that the variance of the noise $u_i$ is known and the kernel in the GP prior is selected to be optimal.\footnote{We assume that the Mat\'{e}rn kernel is selected and the weight of the kernel is equal to $\beta$. Under these conditions, the optimal rate is achieved \citep{tsybakov2008}.}
Then, we obtain the following result.

\begin{them}[Convergence analysis] \label{thm:converge} Let $M=M^* < \infty$.
  Then, with Assumption \ref{asmp:regular_volume} and some finite constant $C>0$,
	\begin{align*}
		%E \Pi_n \left(\|f - f^* \|_n^2 > C \sum_{k=1}^K n^{-2\beta/(2\beta + I_k)}  | D_n \right) \rightarrow 0.
		E \|\hat{f}_n - f^* \|_n^2 \leq C n^{-2\beta/(2\beta + \max_k I_k)}.
	\end{align*}	
\end{them}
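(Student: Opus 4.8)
The plan is to invoke the posterior-contraction machinery of \citet{GGvdV2000,vdVvZ2008}, but applied componentwise to the individual local functions rather than to the full map $G[\mathfrak{F}]$. Because $f^*$ is rank additive and $M=M^*<\infty$, the functional tensor decomposition \eqref{the_model_rank1} expresses $f^*$ exactly in the AMNR form \eqref{the_model} as a finite sum of products $\sum_{m=1}^{M^*}\sum_{r=1}^{R^*}\lambda_r\prod_k f_m^{(k)}(x_r^{(k)})$, so it suffices to control the error of each local function $f_m^{(k)}\in\mathcal{W}^\beta(\mathcal{X}^{(k)})$ and then propagate it through the multilinear structure. The key reduction is the telescoping identity
\begin{align*}
  \prod_k a_k - \prod_k b_k = \sum_{k=1}^K\Bigl(\prod_{j<k}a_j\Bigr)(a_k-b_k)\Bigl(\prod_{j>k}b_j\Bigr),
\end{align*}
which, together with the Sobolev embedding bounding each $\|f_m^{(k)}\|_\infty$ on the compact support $\mathcal{X}^{(k)}$, gives $|\hat f_n(X)-f^*(X)|\le C'\sum_{m,r,k}\lambda_r\,\|\hat f_m^{(k)}-f_m^{(k)}\|_\infty$ for a finite constant $C'$ depending only on these uniform bounds. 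Thus the multiplicative error is linearized into a finite sum of single-function errors, and the finitely many sums over $m$ and $r$ contribute only a constant factor.

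First I would verify the two contraction conditions for each GP prior separately. Since $\pi(\mathfrak{F})$ factorizes as $\prod_{m,k}GP(f_m^{(k)})$, the sieve for the full model is a product of sieves, so its log-covering number is at most the sum of the $MK$ individual log-covering numbers; as $MK$ is finite, this is a constant multiple of the largest individual entropy. The prior-mass condition factorizes analogously: if every GP assigns mass $e^{-cn(\epsilon_n^{(k)})^2}$ to a ball of radius $\epsilon_n^{(k)}$ around $f_m^{(k)}$, then by the linearized bound the product prior assigns comparable mass to a neighborhood of $f^*$ of radius proportional to $\max_k\epsilon_n^{(k)}$. For the Mat\'ern kernel with smoothness matched to $\beta$, the single-function concentration-function bound of \citet{vdVvZ2008} yields the optimal rate $\epsilon_n^{(k)}=n^{-\beta/(2\beta+I_k)}$.

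Next I would combine the componentwise rates. Because $x\mapsto n^{-\beta/(2\beta+x)}$ is increasing in the dimension, the slowest local function, indexed by $k'=\argmax_k I_k$, dominates, so the overall contraction rate is $\epsilon_n=\max_k\epsilon_n^{(k)}=n^{-\beta/(2\beta+\max_k I_k)}$. The standard contraction conclusion then bounds the posterior mean's squared risk by $\epsilon_n^2$; here Assumption~\ref{asmp:regular_volume} is what lets me pass between the GP prediction norm and the empirical norm $\|\cdot\|_n$ and control $E\|\hat f_n-f^*\|_n^2$, delivering the stated $n^{-2\beta/(2\beta+\max_k I_k)}$ bound with finite $C$.

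The main obstacle is that the full posterior on $G[\mathfrak{F}]$ is genuinely non-Gaussian, since the local functions enter multiplicatively, so one cannot apply GP contraction results to the model as a whole. The crux is therefore the linearization step: making the telescoping bound rigorous requires uniform boundedness of the local functions not merely in $L^2$ but in $L^\infty$, and one must check that truncation to the sieve preserves these uniform bounds with overwhelming prior probability so that the product factors $\prod_{j\neq k}$ stay controlled. A secondary difficulty is converting the posterior-measure contraction into a bound on the mean $E\|\hat f_n-f^*\|_n^2$, which requires the posterior's tails outside the sieve to contribute negligibly; this is again handled by the exponentially small prior mass on the sieve complement.
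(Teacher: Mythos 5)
Your proposal follows essentially the same route as the paper's own proof: your telescoping product identity is exactly the paper's Lemma~\ref{lem:decomp_multi}, and your componentwise verification of the entropy, sieve-complement, and small-ball conditions of \citet{vdVvZ2008} for each local GP prior, combined so that the slowest rate $n^{-\beta/(2\beta+\max_k I_k)}$ dominates, is precisely the argument in the appendix. One correction: Assumption~\ref{asmp:regular_volume} is not what converts posterior contraction into the empirical-norm risk bound $E\|\hat{f}_n - f^*\|_n^2$ --- the paper handles that step via boundedness of the metric together with Jensen's inequality applied to the posterior mean; that assumption concerns the decay of $\|\bar{f}_m\|_2$ in $m$ and is only material for the infinite-$M^*$ case of Theorem~\ref{thm:high_model}.
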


Theorem \ref{thm:converge} claims the validity of the estimator \eqref{GP_est}.
Its convergence rate corresponds to
the minimax optimal rate of estimating a function in $\mathcal{W}^{\beta}$ on
compact support in $\mathbb{R}^{I_k}$, showing that the convergence rate of 
AMNR depends only on the largest dimensionality of $X$.

%We emphasize again that this rate is much better than that of general nonparametric regression models on tensors, which is $O(n^{-\beta/(2\beta + \prod_k I_k)})$. 

%\paragraph{Remark}
%Rate reduction

\subsection{Estimation with Infinite $M^*$} \label{theory_est_without}

When $M^*$ is infinite, we cannot use the same strategy used in Section \ref{theory_est}. Instead, we truncate $M^*$ by finite $M$ and evaluate the bias error caused by the truncation. To evaluate the bias, we assume that the local functions are in descending order of their volumes $\bar{f}_m := \sum_r \prod_k f_m^{(k)}$, i.e., $\{\bar f_m\}$ are ordered as satisfying $\|\bar{f}_{m'}\|_2 \geq \|\bar{f}_{m}\|_2$ for all $m' > m$.
We then introduce the assumption that $\|\bar{f}_m\|$ decays to zero polynomially with respect to $m$.
\begin{asmp}\label{asmp:regular_volume}
	With some constant $\gamma > 0$,
	\begin{align*}
		\|\bar{f}_m\|_2 = O\left( m^{-\gamma} \right),
	\end{align*}
	as $m \rightarrow \infty$.
\end{asmp}
This assumption is sufficiently limited. For example, if we pick $\{f^{(k)}_m\}$ as $\sum_r \prod_{k=1}^K f_{m}^{(k)}(x_r^{(k)})$ are orthogonal to each $m$,\footnote{For example, we can obtain such $\{f^{(k)}_m\}$ by the Gram–Schmidt process.} then
the Parseval-type inequality leads to $\sum_{m}\|\bar{f}_m\|_2 = \|\sum_m \bar{f}_m\|_2 = \|f^*\|_2 < \infty$, implying $\|\bar f_m\|_2 \to 0$ as $m\to \infty$.

Then we claim the main result in this section. 
\begin{them} \label{thm:high_model}
	Suppose we construct the estimator \eqref{GP_est} with
	\begin{align*}
		M \asymp (n^{-2 \beta/(2\beta + \max_k I_k)})^{\gamma/(1+\gamma)},
	\end{align*}
where $\asymp$ denotes equality up to a constant. Then, with some finite constant $C>0$, 
	\begin{align*}
		E \|\hat{f}_n - f^* \|_n^2 \leq  C (n^{-2 \beta/(2\beta + \max_k I_k) })^{\gamma/(1+\gamma)}.
	\end{align*}	
\end{them}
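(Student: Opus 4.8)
The plan is to control $E\|\hat f_n - f^*\|_n^2$ through a truncation (bias--variance) decomposition in the rank index $m$. Write $\rho_n := n^{-2\beta/(2\beta+\max_k I_k)}$ for the single-model rate of Theorem~\ref{thm:converge}, and let $f_M^* := \sum_{m=1}^{M}\bar f_m$ be the rank-$M$ truncation of $f^*$, where $\bar f_m = \sum_r \prod_k f_m^{(k)}$. Since the estimator \eqref{GP_est} is built with exactly $M$ components, it targets $f_M^*$, so the natural first step is the triangle inequality followed by $(a+b)^2\le 2a^2+2b^2$:
\begin{align*}
  E\|\hat f_n - f^*\|_n^2 \;\le\; 2\,E\|\hat f_n - f_M^*\|_n^2 \;+\; 2\,\|f^* - f_M^*\|_2^2 .
\end{align*}
(One also checks, under the regularity of the design, that $\|\cdot\|_n$ and $\|\cdot\|_2$ are comparable on the relevant function class, so the deterministic bias term may be measured in $\|\cdot\|_2$.) The two summands are an estimation (variance) term and a truncation (bias) term, and the argument reduces to bounding each and then choosing $M$ to balance them.

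For the bias term I would use Assumption~\ref{asmp:regular_volume} directly. Because the $\bar f_m$ are ordered by decreasing volume and, after the Gram--Schmidt step noted in the text, may be taken mutually orthogonal, a Parseval identity gives $\|f^*-f_M^*\|_2^2 = \sum_{m>M}\|\bar f_m\|_2^2$, and the polynomial decay $\|\bar f_m\|_2 = O(m^{-\gamma})$ bounds this tail by a negative power of $M$; that is, the bias decreases polynomially as $M$ grows.

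For the variance term the idea is to invoke Theorem~\ref{thm:converge} with $f^*$ replaced by the finite-rank $f_M^*$: the truncated model is itself an AMNR of finite rank $M$, so its posterior contracts at the single-model rate $\rho_n$. The crucial point, however, is that the constant in Theorem~\ref{thm:converge} is \emph{not} uniform in $M$: estimating $M$ local functions simultaneously enlarges the $M$-fold product GP prior $\prod_{m,k}GP(f_m^{(k)})$, so the estimation error grows with $M$, of the form $E\|\hat f_n - f_M^*\|_n^2 \lesssim c(M)\,\rho_n$ with $c(M)$ increasing in $M$ (plausibly linearly). Making this dependence explicit is the main obstacle: one must re-examine the posterior-contraction machinery behind Theorem~\ref{thm:converge} --- the prior-mass, metric-entropy, and testing conditions for the product prior --- and verify that the complexity of the $M$-component model inflates the rate only by a polynomial factor in $M$, rather than exponentially. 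This is precisely where the additive structure of \eqref{the_model} must be exploited to keep the entropy under control.

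Finally I would combine the two bounds --- a variance growing like $c(M)\rho_n$ and a bias shrinking like a negative power of $M$ --- and select $M=M_n$ to equalize them. The balancing value grows with $n$, so the truncation level increases as more data arrive, and substituting it back collapses both terms to the common order $\rho_n^{\gamma/(1+\gamma)} = (n^{-2\beta/(2\beta+\max_k I_k)})^{\gamma/(1+\gamma)}$, the claimed rate; the exponent $\gamma/(1+\gamma)$ is the signature of trading a variance polynomial in $M$ against a bias polynomial in $1/M$. I expect the bookkeeping of these powers, and the precise balancing $M_n$, to be routine once the $M$-dependence of the estimation term has been pinned down.
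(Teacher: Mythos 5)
Your overall strategy is the same as the paper's: split the error into a truncation bias $\bigl\|\sum_{m>M}\bar f_m^*\bigr\|$ plus an estimation term for the $M$-component model, bound the bias via Assumption~\ref{asmp:regular_volume}, bound the estimation term by extending Theorem~\ref{thm:converge} with an $M$-dependent factor, and choose $M$ to balance the two. However, the step you defer as the ``main obstacle'' is exactly the step the paper disposes of in one line, and your guess for it is quantitatively wrong. No fresh entropy or prior-mass analysis of the product prior is needed: in the proof of Theorem~\ref{thm:converge} the contraction rate is already the \emph{sum} $\sum_{m,r,k}\epsilon_n^{(k)}$ over all $MRK$ local functions, so the estimation term for the truncated model is
\begin{align*}
E\bigl\|\hat f_n - f_M^*\bigr\|_n \;=\; O\bigl(M\, n^{-\beta/(2\beta+\max_k I_k)}\bigr),
\end{align*}
i.e.\ linear in $M$ for the \emph{unsquared} norm. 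In your squared-norm formulation this means $c(M)\asymp M^2$, not the ``plausibly linear'' $c(M)\asymp M$ you posit.

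The second problem is that the final balancing is not routine bookkeeping; the exponent $\gamma/(1+\gamma)$ is produced by precisely which powers of $M$ appear, and your stated bounds do not produce it. Writing $\rho_n = n^{-2\beta/(2\beta+\max_k I_k)}$ as you do, the paper balances the unsquared bounds: bias $O(M^{-\gamma})$ (its reading of Assumption~\ref{asmp:regular_volume} applied to $\sum_{m>M}\|\bar f_m^*\|_2$) against estimation $O(M\rho_n^{1/2})$, giving $M \asymp n^{(\beta/(2\beta+\max_k I_k))/(1+\gamma)}$ and, after squaring, the claimed rate $\rho_n^{\gamma/(1+\gamma)}$. Your bounds are instead a Parseval tail $\|f^*-f_M^*\|_2^2 \asymp \sum_{m>M} m^{-2\gamma} \asymp M^{1-2\gamma}$ against a squared estimation error $c(M)\rho_n$: with your $c(M)=M$ the balance yields $\rho_n^{(2\gamma-1)/(2\gamma)}$, and with the corrected $c(M)=M^2$ it yields $\rho_n^{(2\gamma-1)/(1+2\gamma)}$; neither equals $\gamma/(1+\gamma)$ in general (the first coincides only at $\gamma=1$, the second never). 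So, as written, your argument proves a different rate from the one claimed, and the assertion that ``substituting it back collapses both terms to the common order $\rho_n^{\gamma/(1+\gamma)}$'' does not follow from your own bounds. To land on the theorem's exponent you must track the paper's specific quantities --- the unsquared tail bound $O(M^{-\gamma})$ and the linear-in-$M$ unsquared estimation bound --- and square only at the end.
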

The above theorem states that, even if we truncate $M^*$ by finite $M$, the convergence rate is nearly the same as the case of finite $M^*$ (Theorem \ref{thm:converge}), which is slightly worsened by the factor $\gamma / (1+\gamma)$.

Theorem \ref{thm:high_model} also suggests how to determine $M$.
For example, if $\gamma = 2, \beta = 1$, and $\max_k I_k = 100$,
$M \asymp  n^{1/70} $ is recommended, which is much smaller than the sample size.
Our experimental results (Section \ref{sec:experiments_rate}) also support this. Practically, very small $M$ is sufficient, such as $1$ or $2$, even if $n$ is greater than $300$.

Here, we show the conditional consistency of AMNR, which is directly derived from Theorem \ref{thm:high_model}.
\begin{cor}
	For all function $f^* \in \mathcal{W}^{\beta}$ with finite $M=M^*$ or Assumption \ref{asmp:regular_volume}, the estimator \eqref{GP_est} satisfies
	\begin{align*}
			E \|\hat{f}_n - f^0 \|_n^2 \rightarrow 0,
	\end{align*}
	as $n \rightarrow \infty$.
\end{cor}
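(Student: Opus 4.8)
The plan is to reduce the statement to the two convergence theorems already established by splitting the total error into an estimation part and a bias part, exactly as in the decomposition \eqref{decomp_bound}. Since the conclusion is phrased in terms of the expected squared empirical norm, I would first apply the triangle inequality for $\|\cdot\|_n$ together with the elementary bound $(a+b)^2 \le 2a^2 + 2b^2$ and take expectations, obtaining
\begin{align*}
  E\|\hat{f}_n - f^0\|_n^2 \;\le\; 2\,E\|f^0 - f^*\|_n^2 \;+\; 2\,E\|\hat{f}_n - f^*\|_n^2 .
\end{align*}
It then suffices to show that each of the two terms on the right tends to zero as $n \to \infty$.

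The estimation term is immediate once the earlier results are in hand. If $M = M^* < \infty$, Theorem \ref{thm:converge} gives $E\|\hat{f}_n - f^*\|_n^2 \le C\,n^{-2\beta/(2\beta+\max_k I_k)}$; if instead Assumption \ref{asmp:regular_volume} holds, covering the infinite-$M^*$ case, Theorem \ref{thm:high_model} gives $E\|\hat{f}_n - f^*\|_n^2 \le C\,(n^{-2\beta/(2\beta+\max_k I_k)})^{\gamma/(1+\gamma)}$. In either case the exponent of $n$ is strictly negative, so this term vanishes; this is where the corollary is ``directly derived'' from the theorems.

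The hard part, and the reason the consistency is only \emph{conditional}, is the bias term $E\|f^0 - f^*\|_n^2$. As noted after \eqref{decomp_bound}, this quantity is in general difficult to control: $f^*$ is the projection of $f^0$ onto the rank-additive subclass of $\mathcal{W}^\beta$, and taking the expectation of the empirical norm merely returns the fixed population bias $\|f^0 - f^*\|^2$, which need not vanish. The corollary is therefore to be read under precisely the exceptions identified after \eqref{decomp_bound}, where the projection is exact: when $f^0$ is itself rank additive, so that $f^* = f^0$, or under the stated degenerate rank condition on $X$ (so that \eqref{the_model_rank1} represents $f^0$ without error). Under either hypothesis $\|f^0 - f^*\|_n = 0$ identically, so the bias term drops out, and combining with the estimation bound yields $E\|\hat{f}_n - f^0\|_n^2 \to 0$. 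I would close by remarking that outside these hypotheses the same argument still controls the distance to $f^*$ rather than to $f^0$, and that quantifying $\|f^0 - f^*\|$ in the general case is exactly the gap the present theory leaves open.
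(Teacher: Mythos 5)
Your proof is correct and follows essentially the same route the paper intends: the paper offers no separate argument for this corollary, deriving it ``directly'' from Theorems \ref{thm:converge} and \ref{thm:high_model}, exactly as you do via the decomposition \eqref{decomp_bound} together with the observation that the estimation term $E\|\hat{f}_n - f^*\|_n^2$ vanishes under either hypothesis. Your explicit treatment of the bias term $\|f^0 - f^*\|$ --- noting that the conclusion, as stated in terms of $f^0$ rather than $f^*$, holds only in the exceptional cases where the projection is exact (rank additivity of $f^0$, or the rank-one input case), which is precisely why the paper calls this \emph{conditional} consistency --- makes explicit a step the paper leaves implicit.
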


\section{Related Work}

\subsection{Nonparametric Tensor Regression}
The \emph{tensor GP (TGP)} \citep{zhao2014,hou2015a} in a method that estimates the function in $\mathcal{W}^{\beta}(\mathcal{S})$ directly.
TGP is essentially a GP regression model that flattens a tensor into a high-dimensional vector and takes the vector as an input. \citet{zhao2014} proposed its estimator and applied it to image recognition from a monitoring camera. \citet{hou2015a} applied the method to analyze brain signals.
Although both studies demonstrated the high performance of TGP, its theoretical aspects such as convergence have not been discussed.

\citet{Signoretto2013} proposed a regression model
with tensor product reproducing kernel Hilbert spaces
(TP-RKHSs). Given a set of vectors $\{x_k\}_{k=1}^K$, their model is
written as
\begin{align}
	\sum_{j} \alpha_{j} \prod_k f_j^{(k)}(x_k), \label{model:signoretto}
\end{align}
where $\alpha_j$ is a weight. 
The key difference between TP-RKHSs (\ref{model:signoretto}) and AMNR
is in the input. TP-RKHSs take only a single vector for each order,
meaning that the input is implicitly assumed as rank one. On the other
hand, AMNR takes rank-$R$ tensors where $R$ can be greater than
one. This difference allows AMNR to be used for more general purposes,
because the tensor rank observed in the real world is mostly greater
than one. Furthermore, the properties of the estimator, such as
convergence, have not been investigated.

\subsection{TLR}
For the matrix case ($K=2$), 
\citet{dyrholm2007} proposed a classification model as
\eqref{lin_model}, where $B$ is assumed to be low rank. 
\citet{hung2013} proposed a logistic regression where the
expectation is given by \eqref{lin_model} and $B$ is a rank-one
matrix. \citet{zhou2013} extended these concepts for tensor
inputs. \citet{suzuki2015} and \citet{guhaniyogi2015} proposed a Bayes
estimator of TLR and investigated its convergence rate.

Interestingly, AMNR is interpretable as a piecewise nonparametrization
of TLR.  Suppose $B$ and $X$ have rank-$M$ and rank-$R$ CP
decompositions, respectively.  The inner product \emph{in the tensor
  space} in (\ref{lin_model}) is then rewritten as the product of the
inner product \emph{in the low-dimensional vector space}, i.e.,
\begin{align}
  \label{eq:low-rank linear}
	\langle B,X_i \rangle = \sum_{m=1}^M \sum_{r=1}^R \lambda_{r,i} \prod_{k=1}^K \langle b_m^{(k)},x_{r,i}^{(k)} \rangle,
\end{align}
where $b_m^{(k)}$ is the order-$K$ decomposed vector of
$B$. The AMNR formation is obtained by replacing the inner product $\langle b_m^{(k)},x_r^{(k)}
\rangle$ with local function $f_m^{(k)}$.

From this perspective, we see that AMNR incorporates the advantages of
TLR and TGP.  AMNR captures nonlinear relations between $Y$ and $X$
through $f_m^{(k)}$, which is impossible for TLR due to its
linearity. Nevertheless, in contrast to TGP, an input of the function
constructed in a nonparametric way is given by an $I_k$-dimensional
vector rather than an $(I_1,\dots,I_K)$-dimensional tensor. This
reduces the dimension of the function's support and significantly
improves the convergence rate (Section~\ref{sec:theory}).

\subsection{Other studies}
\citet{koltchinskii2010} and \citet{suzuki2013} investigated
\emph{Multiple Kernel Learning (MKL)} considering a nonparametric
$p$-variate regression model with an additive structure:
$\sum_{j=1}^p f_j(x_j)$.
To handle high dimensional inputs, MKL reduces the input dimensionality by the additive structure for $f_j$ and $x_j$.
Note that both studies deal with a vector input, and they do not fit to tensor regression analysis.

\begin{table}[tbph]
\caption{Comparison of related methods. \label{fig:compare}}
\label{sample-table}
\vskip 0.15in
\begin{center}
\begin{small}
\begin{sc}
\begin{tabular}{lcccr}
\hline
\abovespace\belowspace
Method & \shortstack{Tensor\\Input}  & \shortstack{Non-\\linearity} & \shortstack{Convergence\\Rate with \eqref{general-rate}}  \\
\hline
\abovespace
TLR   		 &	$\surd$ 		& 					& $d = 0$ \\
TGP 			&	$\surd$ 		& $\surd$ 	&	$d = \prod_k I_k$			 \\
TP-RKHSs & 	rank-$1$		& $\surd$	& N/A \\
MKL		    &						& $\surd$	& N/A \\
AMNR    	&	$\surd$ 		& $\surd$	& $d = \max I_k$ \\
\hline
\end{tabular}
\end{sc}
\end{small}
\end{center}
\vskip -0.1in
\end{table}

Table \ref{fig:compare} summarizes the methods introduced in this section. As shown,
MKL and TP-RKHSs are not applicable for general tensor input.
In contrast, TLR, TGP, and AMNR can take multi-rank tensor data as inputs, and their applicability is much wider. Among the three methods, AMNR is only the one that manages nonlinearity and avoids the curse of dimensionality on tensors.

%%% Local Variables:
%%% mode: latTruncation
%%% TeX-master: "sample_paper.tex"
%%% End:

\begin{figure*}[tb]
\begin{center}
  \includegraphics[width = 50mm]{./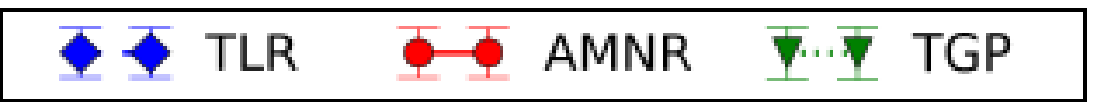}
\end{center}
\begin{minipage}{0.49\hsize}
	\begin{minipage}{0.49\hsize}
	  \includegraphics[width=42mm]{./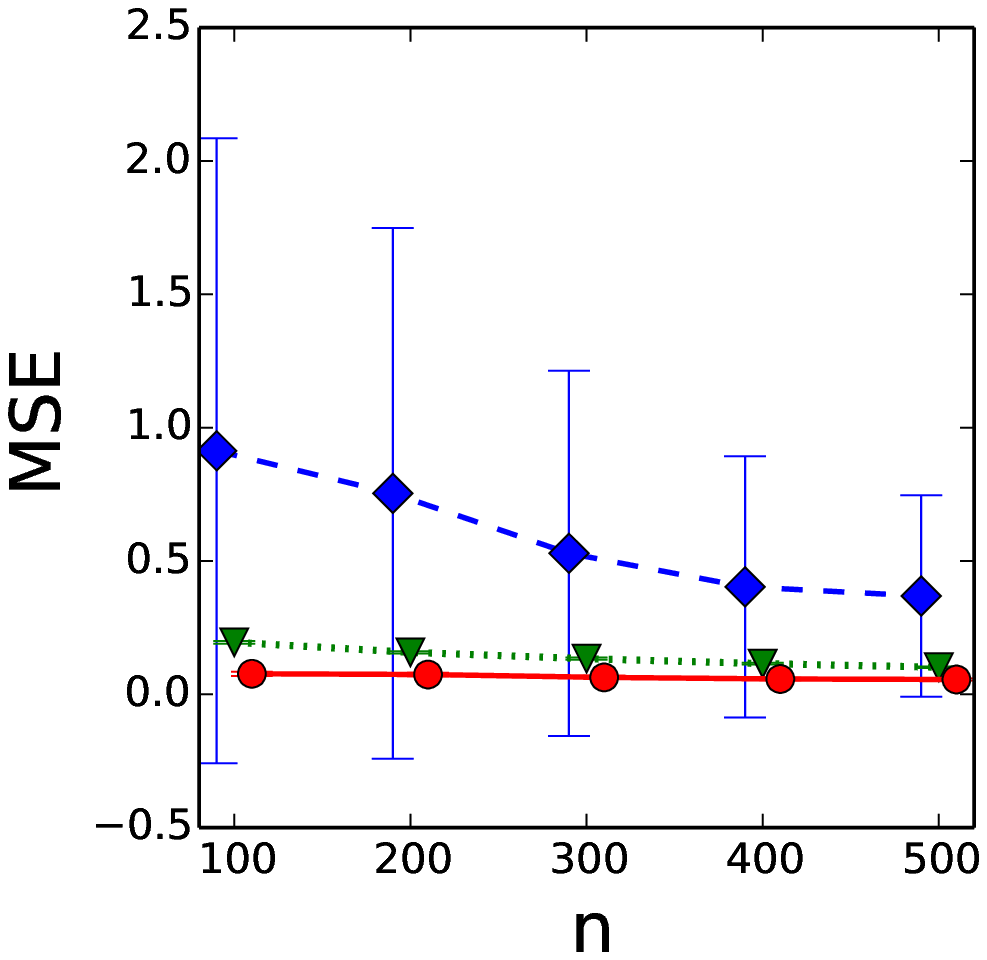}
	  \begin{center}
	  	(a) Full view
	  \end{center}
	\end{minipage}
	\begin{minipage}{0.49\hsize}
	  \includegraphics[width=42mm]{./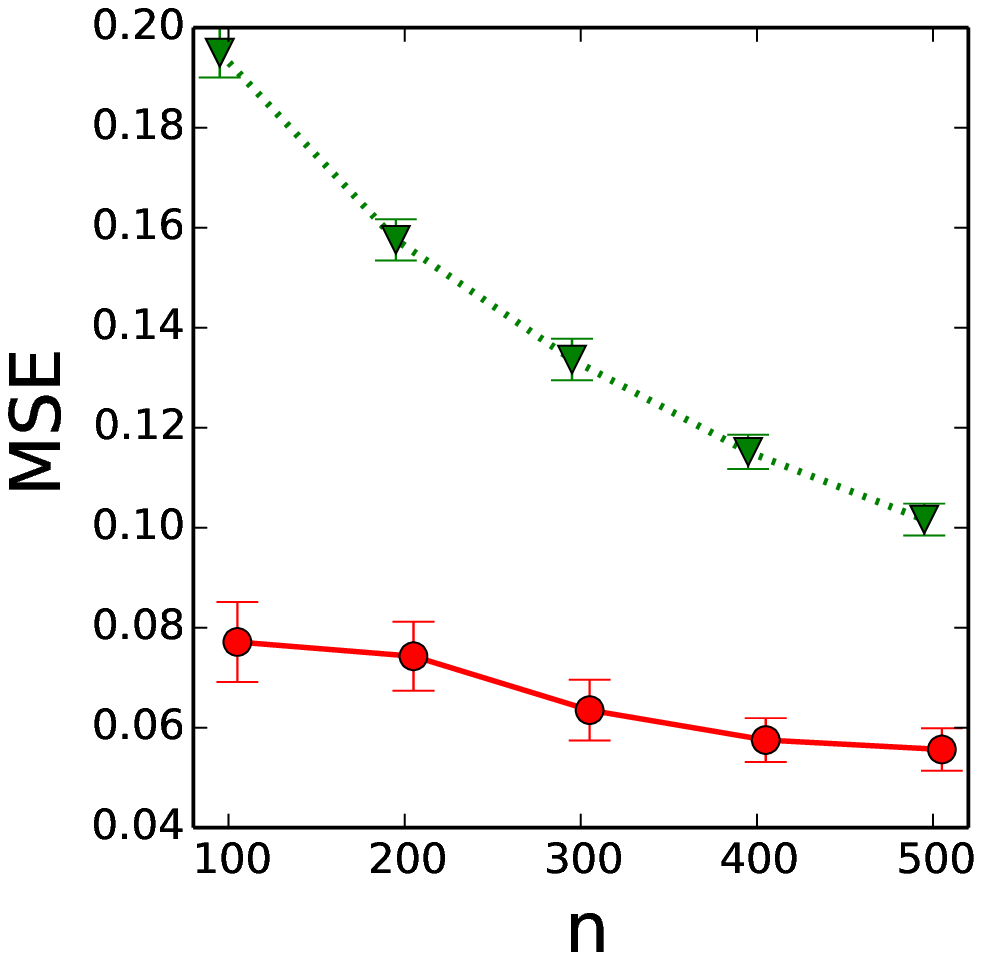}
	  \begin{center}
	  	(b) Enlarged view
	  \end{center}
	\end{minipage}
	 \caption{Synthetic data experiment: Low-rank data.\label{exp1}}
\end{minipage}
\begin{minipage}{0.49\hsize}
	\begin{minipage}{0.49\hsize}
	  \includegraphics[width=42mm]{./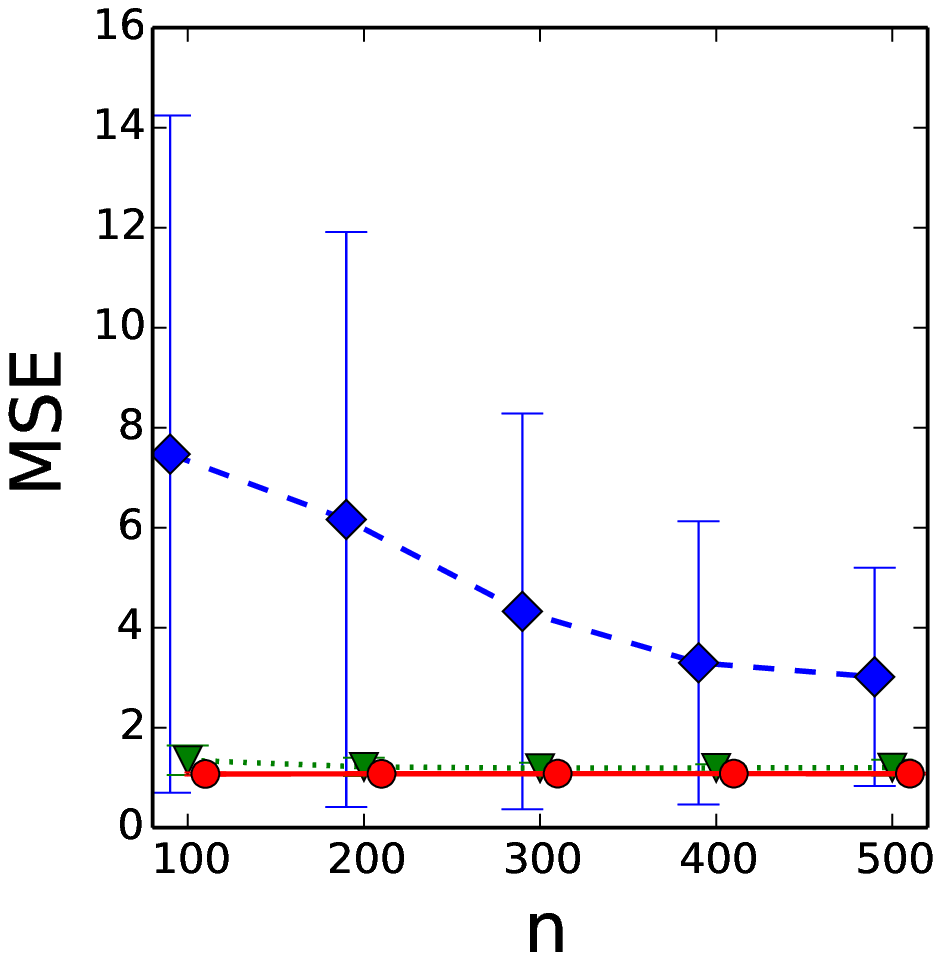}
	  \begin{center}
	  	(a) Full view
	  \end{center}
	\end{minipage}
	\begin{minipage}{0.49\hsize}
	  \includegraphics[width=42mm]{./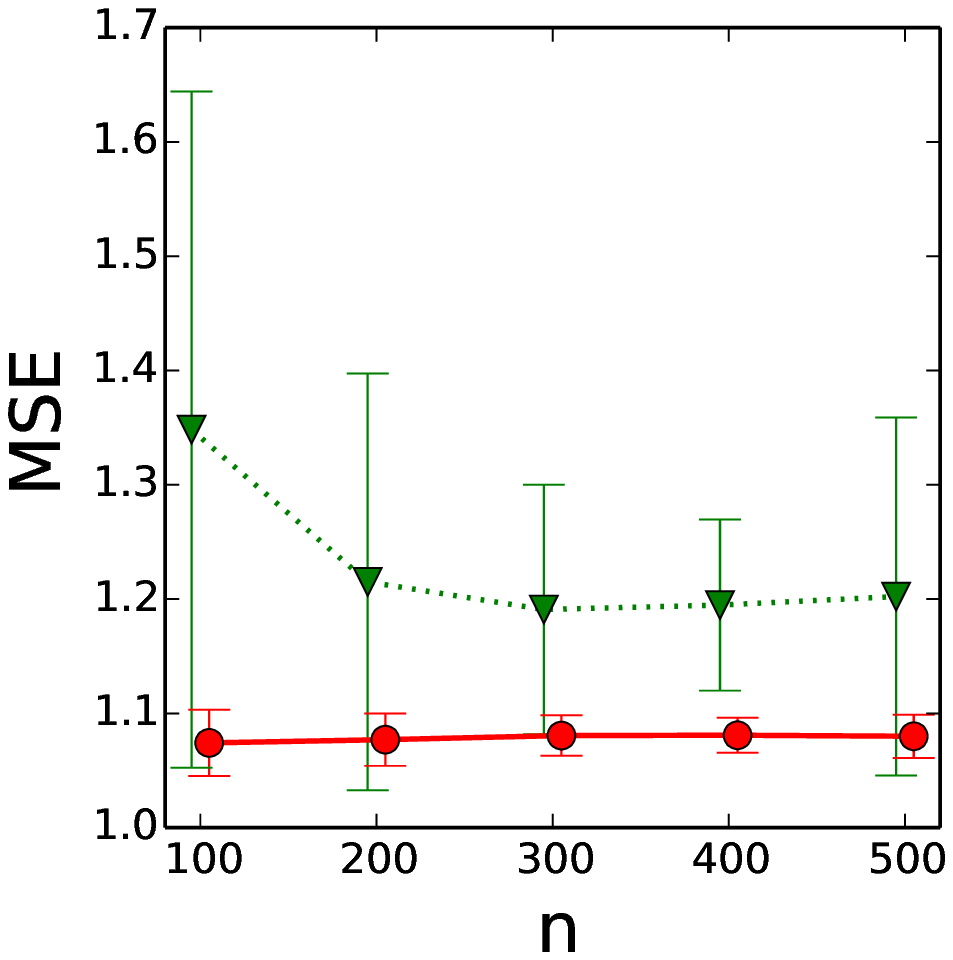}
	  \begin{center}
	  	(b) Enlarged view
	  \end{center}
	\end{minipage}
	  \caption{Synthetic data experiment: Full-rank data.\label{exp3}}
\end{minipage}
\end{figure*}

\begin{figure*}[tb]
\begin{minipage}{0.65\hsize}
	\begin{center}
		  \includegraphics[width = 50mm]{./pic/legend.eps}	
	\end{center}
%\begin{minipage}{0.24\hsize}
%	  \includegraphics[width=42mm]{./pic/Figure2_0.eps}
%    \caption{MSE of TGP and AMNR. Theoretival values are shown  by dash-dot lines.\label{conv_rate}}
%\end{minipage}
	\begin{minipage}{0.99\hsize}
		\begin{minipage}{0.49\hsize}
		  \begin{center}
		  \includegraphics[width=42mm]{./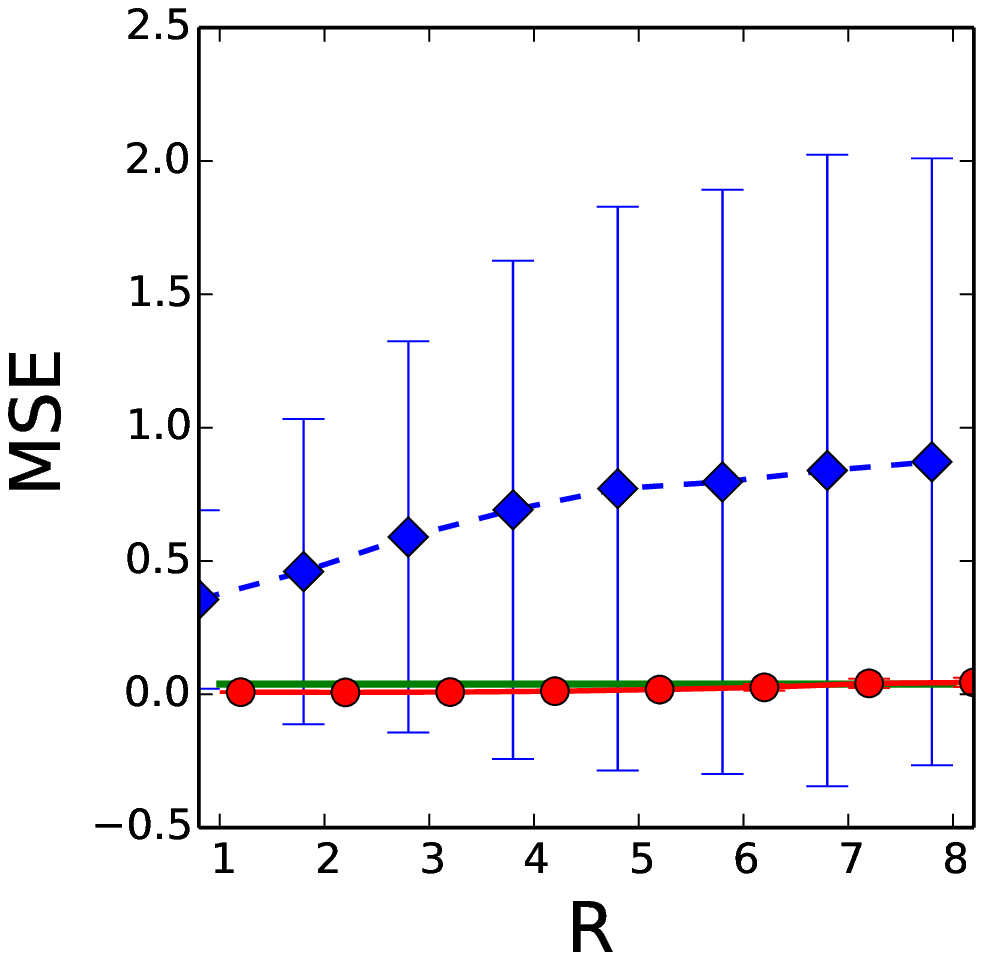}\\
		  	(a) Full view
		  \end{center}
		\end{minipage}
		\begin{minipage}{0.49\hsize}
		  \begin{center}
		  \includegraphics[width=42mm]{./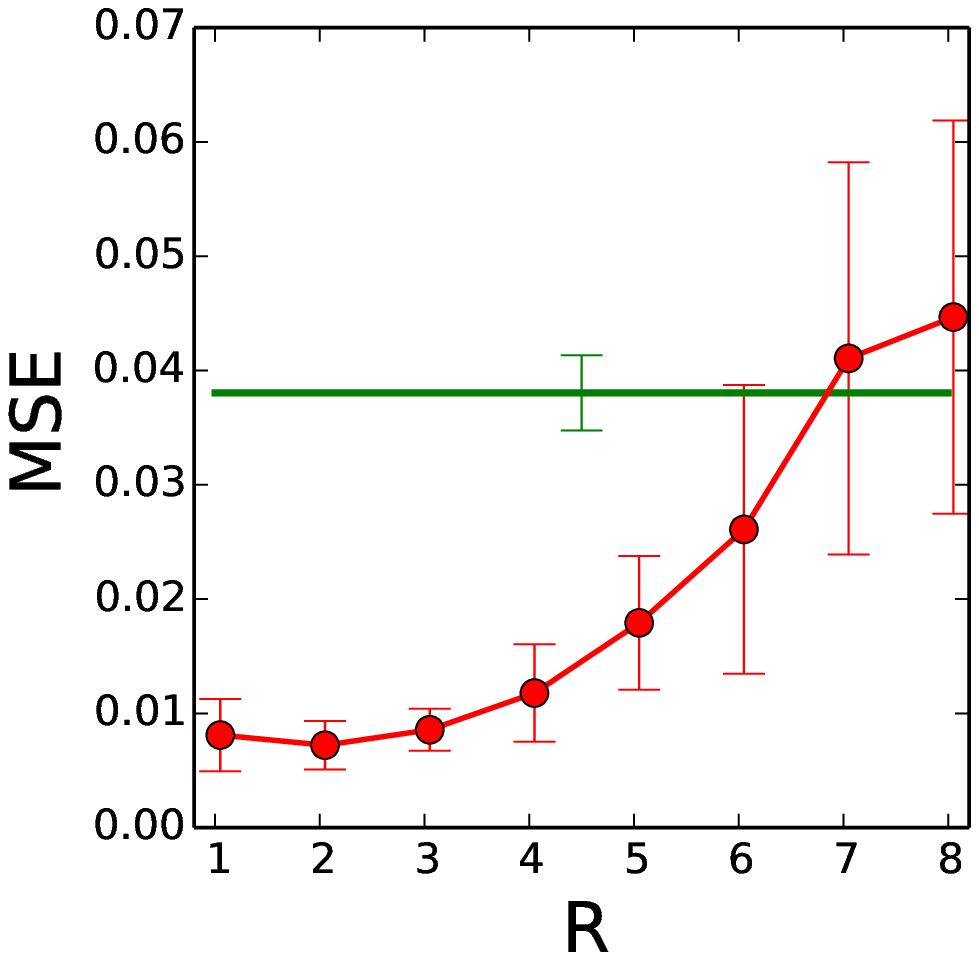}\\
		  	(b) Enlarged view
		  \end{center}
		\end{minipage}
    	\caption{Synthetic data experiment: Sensitivity of $R$.\label{expR}}
	\end{minipage}
\end{minipage}
\begin{minipage}{0.35\hsize}
	  \begin{center}
	 \includegraphics[width=42mm]{./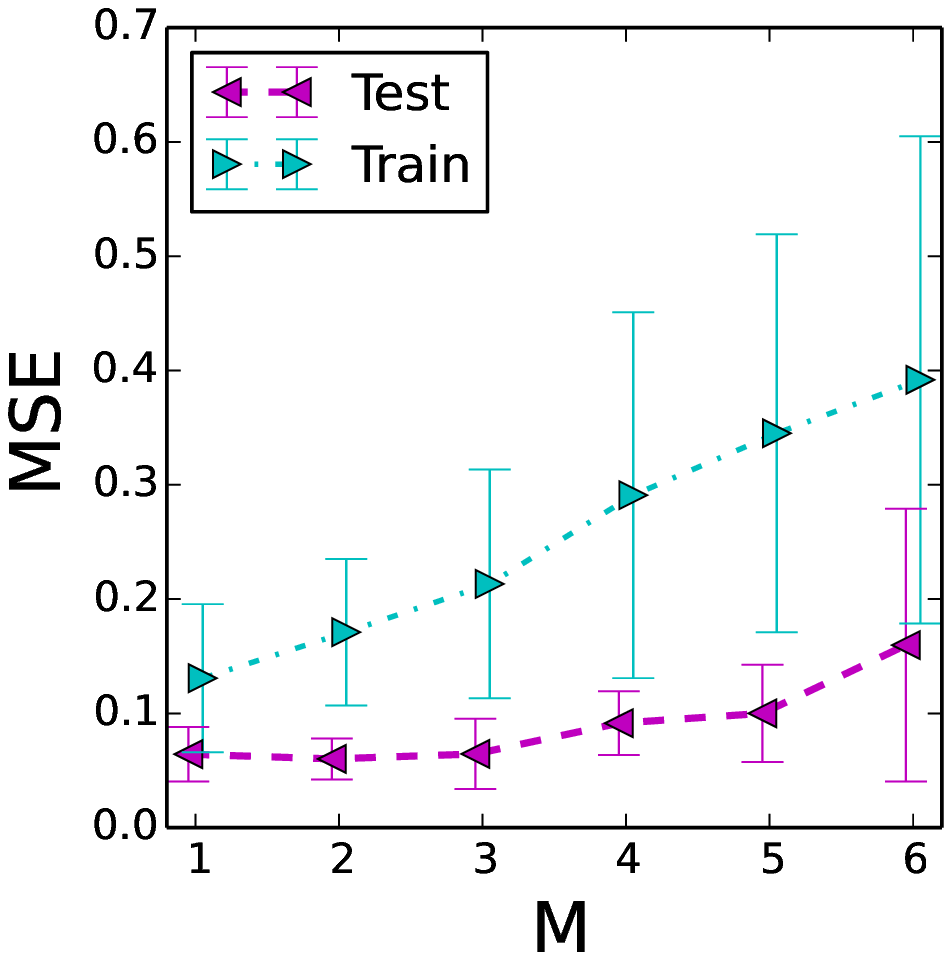}\\
	  \caption{Synthetic data experiment: Sensitivity of $M$.\label{expM}}
	  \end{center}
\end{minipage}
\end{figure*}

\section{Experiments} \label{sec:experiments}
%In this section, we provide numerical experiments and real data
%analysis. Firstly, we evaluate prediction risk in some
%situations. Next, we discuss about select of tuning
%parameters. Finally, we provide one of advantages of our model;
%visualization of local functions. Thirdly, we implement a real data
%analysis.

\subsection{Synthetic Data} \label{nume_exp} 

We compare the prediction performance of three models: TLR, TGP, and
AMNR. In all experiments, we generate datasets by the data generating
process (dgp) as $Y = f^*(X) + u$ and fix the noise variance as
$\sigma^2 = 1$. We set the size of $X \in \mathbb{R}^{20 \times 20}$,
i.e., $K = 2$ and $I_1=I_2 = 20$.
By varying the sample size as $n \in \{100,200,300,400,500\}$, we evaluate
the empirical risks by the mean-squared-error (MSE) for the testing
data, for which we use one-half of the samples.
For each experiment, we derive the mean and variance of the MSEs in
100 trials.
% Sample size is $n = 100,200,300,400,500$ and $50\%$ of the data are
% for training and $50\%$ for test. With $200$ replication, we derive
% mean and variance of empirical risks. In following figures, a red
% line is AMNR, a blue line is TLR, and green line is TGP.
For TGP and AMNR, we optimize the bandwidth of the kernel function by
grid search in the training phase.

\subsubsection{Low-Rank Data}\label{sec:low-rank-model}

First, we consider the case that $X$ and the dgp are exactly
low rank. We set $R^*$, the true rank of $X$, as $R^* = 4$ and
$$f^*(X) = \sum_{r=1}^{R^*} \lambda_r
\prod_{k=1}^K (1 + \exp(–\gamma^T x_r^{(k)}))^{-1}$$
where $[\gamma]_j=0.1 j$.  The results (Figure~\ref{exp1}) show that
AMNR and TGP clearly outperform TLR, implying that they
successfully capture the nonlinearity of the true function.
%
%Though the linear methods has the better convergence rate with order
%$O(n^{-1/2})$, its bias from linearity makes the prediction risk
%bigger.
%
To closely examine the difference between AMNR and TGP, we enlarge the
corresponding part (Figure~\ref{exp1}(b)),
which shows that AMNR consistently outperforms TGP. Note that the
performance of TGP improves gradually as $n$ increases, implying that
the sample size is insufficient for TGP due to its slow
convergence rate.

\subsubsection{Full-Rank Data}\label{sec:full-rank-model}

Next, we consider the case that $X$ is full rank and the dgp
has no low-rank structure, i.e., model misspecification will occur
in TLR and AMNR.  We generate $X$ as $X_{j_1 j_2} \sim
\mathcal{N}(0,1)$ with 
$$f^*(X) = \prod_{k=1}^K
(1 + \exp( - \|X\|_2/\prod_k I_k))^{-1}.$$ 
The results
(Figure~\ref{exp3}) show that, as in the previous
experiment, AMNR and TGP outperform TLR. Although the difference
between AMNR and TGP (Figure \ref{exp3}(b)) is much smaller,
AMNR still outperforms TGP. This implies that, while the
effect of AMNR's model misspecification is not negligible, TGP's slow
convergence rate is more problematic.

\subsection{Sensitivity of Hyperparameters}

Here, we investigate how the truncation of $R^*$
and $M^*$ affect prediction performance. In the following
experiments, we fix the sample size as $n = 300$.

First, we investigate the sensitivity of $R$. We use the same low-rank dgp used
in Section~\ref{sec:low-rank-model} (i.e., $R^*=4$.)
The results (Figure~\ref{expR}) show that AMNR and TGP
clearly outperform TLR.
%Note that TGP does not have a parameter $R$, and it is not
%affected by the choice of $R$.\footnote{In figure~\ref{expR}, the MSE
%  of TGP is slightly fluctuated. This is because we generate a dataset
%  with a different random seed for each $R$.}
%
Although their performance is close, AMNR beats TGP when $R$ is not
too large, implying that the negative effect of truncating $R^*$ is
limited.

Next, we investigate the sensitivity of $M$.  We use the same
full-rank dgp used in
Section~\ref{sec:full-rank-model}. Figure~\ref{expM} compares the
training and testing MSEs of AMNR, showing that both errors increase
as $M$ increases. These results imply that the model bias decreases
quickly and estimation error is more dominant. Indeed, the lowest
testing MSE is achieved at $M=2$. This agrees satisfactory with the
analysis in Section \ref{theory_est_without}, which recommends small
$M$.

%The choice of M is also the same.

\subsection{Convergence Rate}\label{sec:experiments_rate}
Here, we confirm how the empirical convergence rates of AMNR and TGP
meet the theoretical convergence rates. To clarify the relation, we
generate synthetic data from dgp with $\beta = 1$ such that the
difference between TGP and AMNR is maximized. To do so, we design the dgp
function as 
$f^*(X) = \sum_{r=1}^R \prod_{k=1}^K f^{(k)} (x_r^{(k)})$
and 
$$f^{(k)} = \sum_l \mu_l \phi_l(\gamma^T x),$$
where $\phi_l(z) =
\sqrt{2} \cos ((l-1/2)\pi z)$ is an orthonormal basis function of the
functional space and $\mu_l = l^{-3/2}\sin (l)$.\footnote{This dgp is
  derived from a theory of Sobolev ellipsoid; see
  \cite{tsybakov2008}.}

\begin{table}[h]
\vskip 0.15in
\begin{center}
\begin{small}
\begin{sc}
\begin{tabular}{lccccccc}
\hline
Setting & $K$ & $R^*$ & $I_1$ & $I_2$& $I_3$ & \multicolumn{2}{c}{$d$ in \eqref{general-rate}}\\
No. &  &  &  & &  & TGP & AMNR\\
\hline
\abovespace
(i)   		& $3$ & $2$ & $10$&$10$&$10$ & $1000$ & $10$\\
(ii)		& $3$ & $2$ & $3$&$3$&$3$ & $27$ & $3$\\
(iii)   	& $3$ & $2$ &$10$&$3$&$3$ & $90$ & $10$ \\
\hline
\end{tabular}
\end{sc}
\end{small}
\end{center}
\caption{Synthetic data experiment: Settings for convergence rate.\label{table:setting}}
\vskip -0.1in
\end{table}

For $X$ we consider three variations: $3\times 3\times 3$, $10\times,
3\times 3$, and $10\times 10\times 10$
(Table~\ref{table:setting}). Figure~\ref{conv_rate} shows testing MSEs
averaged over $100$ trials. The theoretical convergence rates are also
depicted by the dashed line (TGP) and the solid line (AMNR). To align
the theoretical and empirical rates, we adjust them at $n=50$. The
result demonstrates the theoretical rates agree with the practical
performance.

\begin{figure}[tbhp]
	\begin{minipage}{0.24\hsize}
		\begin{center}
	  		\includegraphics[width=0.99\hsize]{./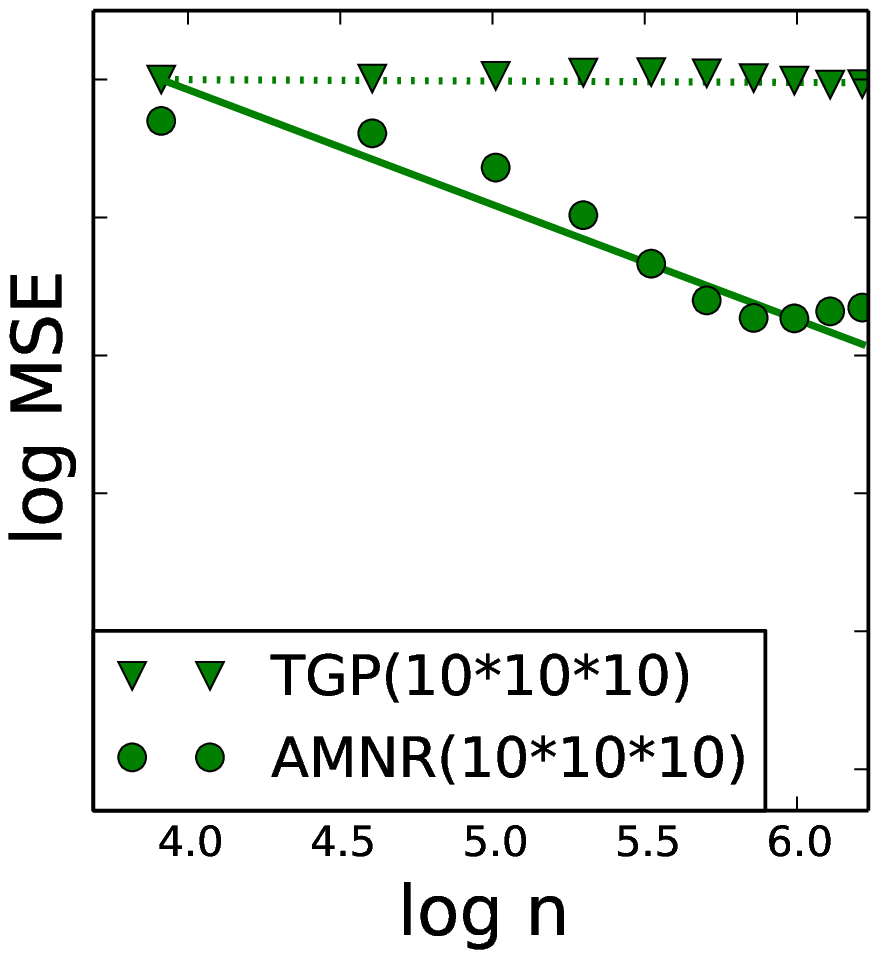}\\
	  		(I) $10 \times 10 \times 10$ tensor.
  		\end{center}
	\end{minipage}
	\begin{minipage}{0.24\hsize}
		\begin{center}
	  		\includegraphics[width=0.99\hsize]{./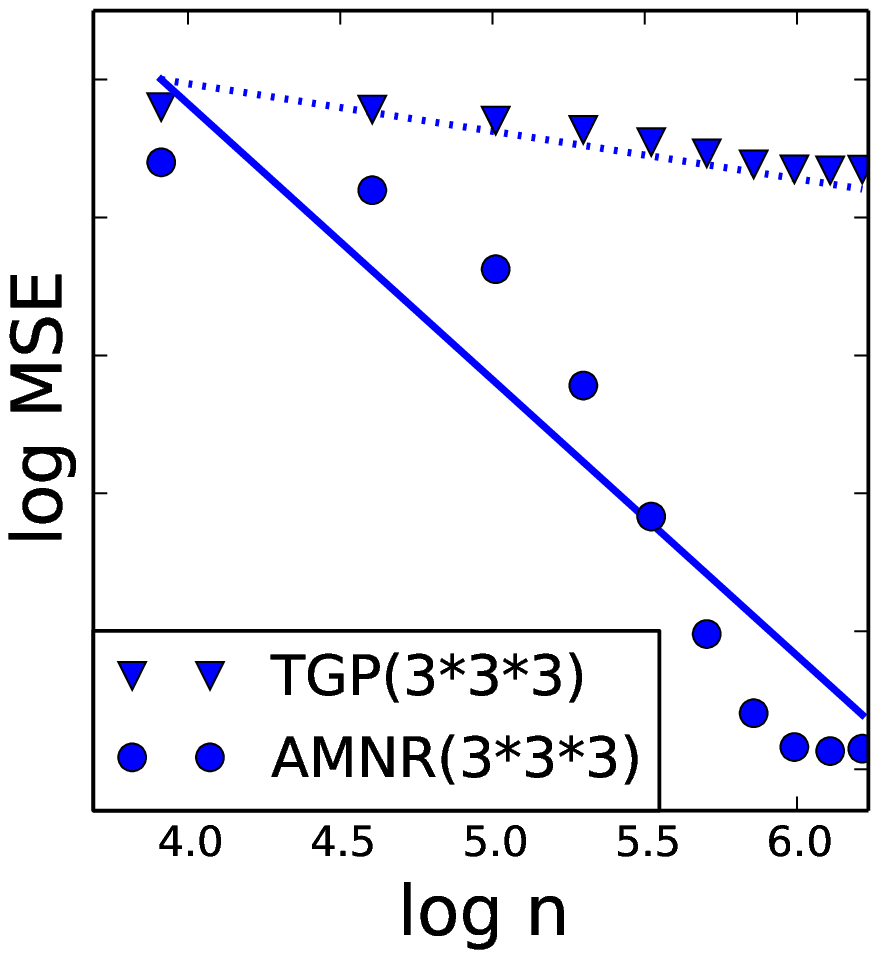}\\
	  		(II) $3 \times 3 \times 3$\\ tensor.
  		\end{center}
	\end{minipage}
	\begin{minipage}{0.24\hsize}
		\begin{center}
	  		\includegraphics[width=0.99\hsize]{./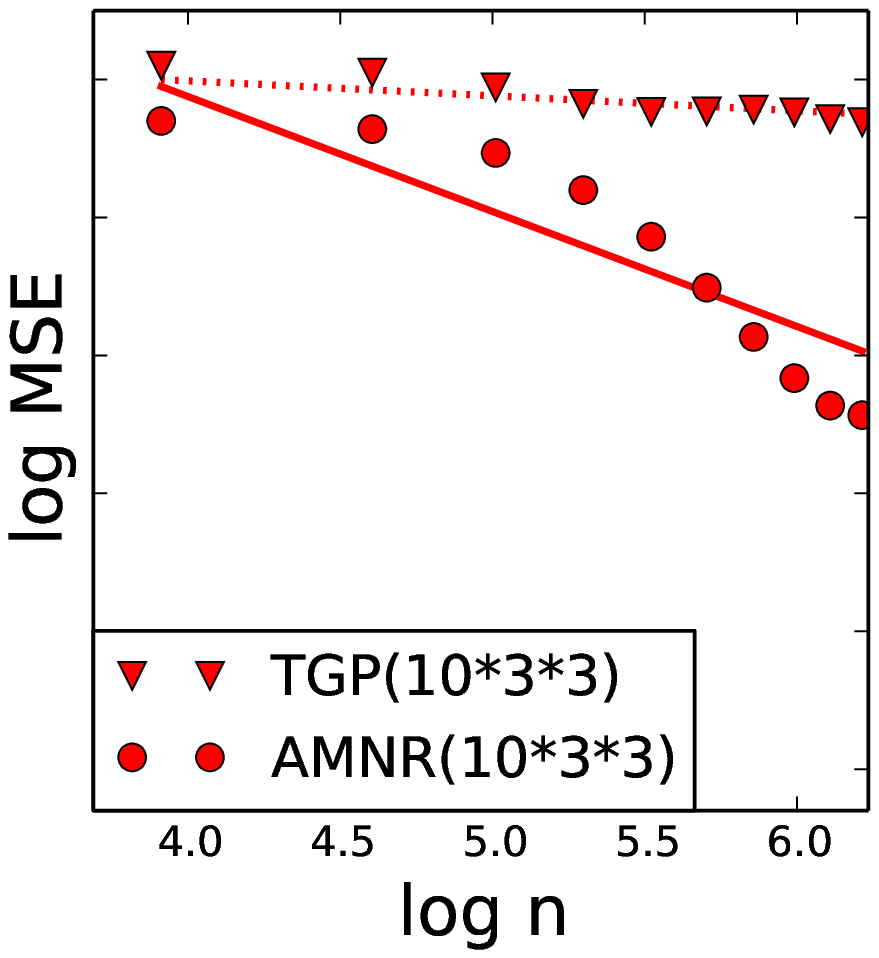}\\
	  		(III) $10 \times 3 \times 3$ tensor.
  		\end{center}
	\end{minipage}
	\begin{minipage}{0.24\hsize}
		\begin{center}
	  		\fbox{\includegraphics[width=0.69\hsize]{./pic/legend.eps}}
                        %\fbox{\includegraphics[width=0.69\hsize]{./}}
  		\end{center}		
	\end{minipage}
\caption{Comparison of convergence rate.\label{conv_rate}}
\end{figure}

\subsection{Prediction of Epidemic Spreading}

Here, we deal with the \emph{epidemic spreading} problem in complex
networks~\cite{anderson1992,alessandro2012} as a matrix regression
problem. More precisely, given an adjacency matrix network $X_i$, we simulate the spreading
process of a disease by the susceptible-infected-recovered (SIR) model
as follows.
\begin{enumerate}
\item We select 10 nodes as the initially infected nodes.
\item The nodes adjacent to the infected nodes become infected with
  probability $0.01$.
\item Repeat Step 2. After 10 epochs, the infected nodes recover and are no longer infected (one iteration = one epoch).
\end{enumerate}
After the convergence of the above process, we count the total number of
infected nodes as $Y_i$.
Note that the number of infected nodes depends strongly on the network
structure and its prediction is not trivial. Conducting the simulation
is of course a reliable approach; however, it is time-consuming, especially
for large-scale networks. In contrast, once we obtain a trained model,
regression methods make prediction very quick.

As a real network, we use the Enron email dataset~\cite{klimt2004}, which is a collection of emails. We consider these emails as
undirected links between senders and recipients (i.e., this is a
problem of estimating the number of email addresses infected by an
email virus).  First, to reduce the network size, we select the top
$1,000$ email addresses based on frequency and delete emails sent to
and received from other addresses. After sorting the remaining emails
by timestamp, we sequentially construct an adjacency matrix from
every $2,000$ emails, and we finally obtain $220$ input matrices.

For the analysis, we set $R=2$ for the AMNR and TLR
methods.\footnote{We also tested $R = 1,2,4$, and $8$; however, the
  results were nearly the same.} Although $R=2$ seems small, we can
still use the top-two eigenvalues and eigenvectors, which contain a
large amount of information about the original tensor. In addition,
the top eigenvectors are closely related to the threshold of outbreaks
in infection networks~\cite{wang2003}. From these perspectives, the
good performance demonstrated by AMNR with $R=2$ is reasonable.
The bandwidth of the kernel is optimized by grid search in the
training phase.  

Figure~\ref{exp_net} shows the training and testing MSEs. Firstly,
there is a huge performance gap between TLR and the nonparametric
models in the testing error. This indicates that the relation between
epidemic spreading and a graph structure is nonlinear and the linear
model is deficient for this problem. Secondly, AMNR outperforms TGP
for every $n$ in both training and testing errors. In addition, the
performance of AMNR is constantly good and almost unaffected by $n$.
This suggests that the problem has some extrinsic information that
inflates the dimensionality, and the efficiency of TGP is
diminished. On the other hand, it seems AMNR successfully captures the
intrinsic information in a low-dimensional space by its ``double
decomposition'' so that AMNR achieves the low-bias and low-variance
estimation.

\begin{figure}[tbhp]
\begin{center}
  \includegraphics[width = 50mm]{./pic/legend.eps}
\end{center}
\begin{minipage}{0.48\hsize}
  \begin{center}
  \includegraphics[width=41mm]{./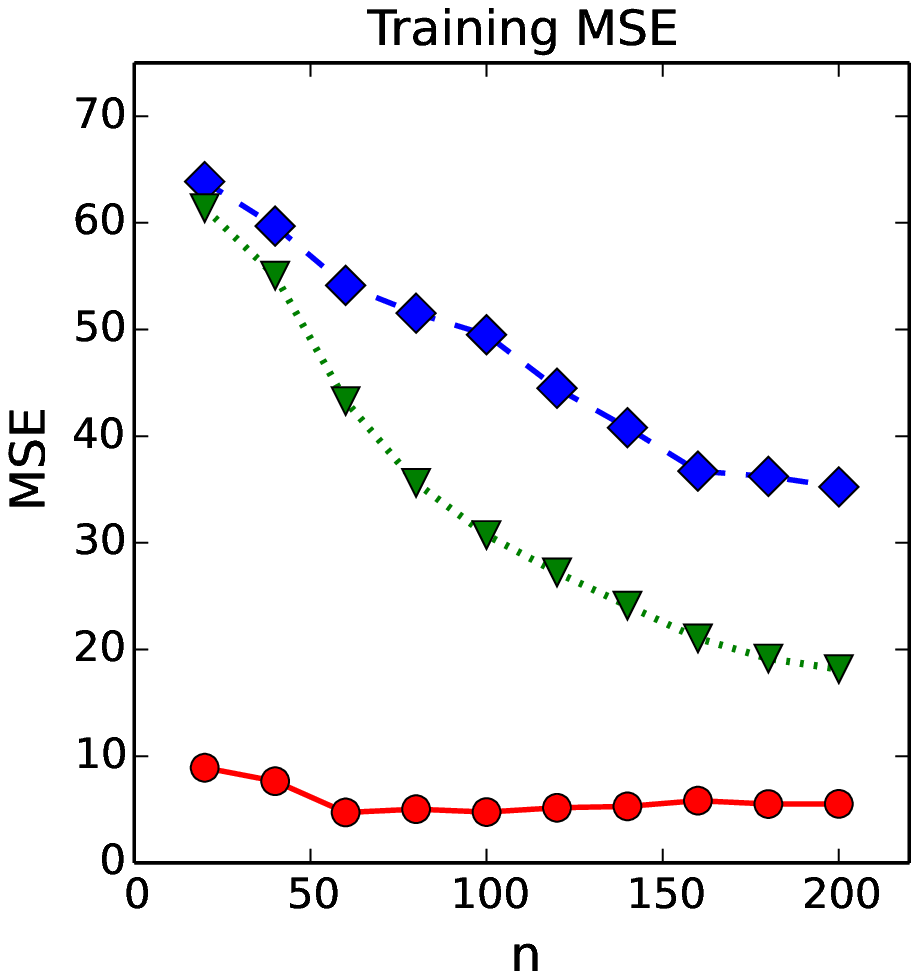}
  \end{center}
\end{minipage}
\begin{minipage}{0.48\hsize}
  \begin{center}
  \includegraphics[width=41mm]{./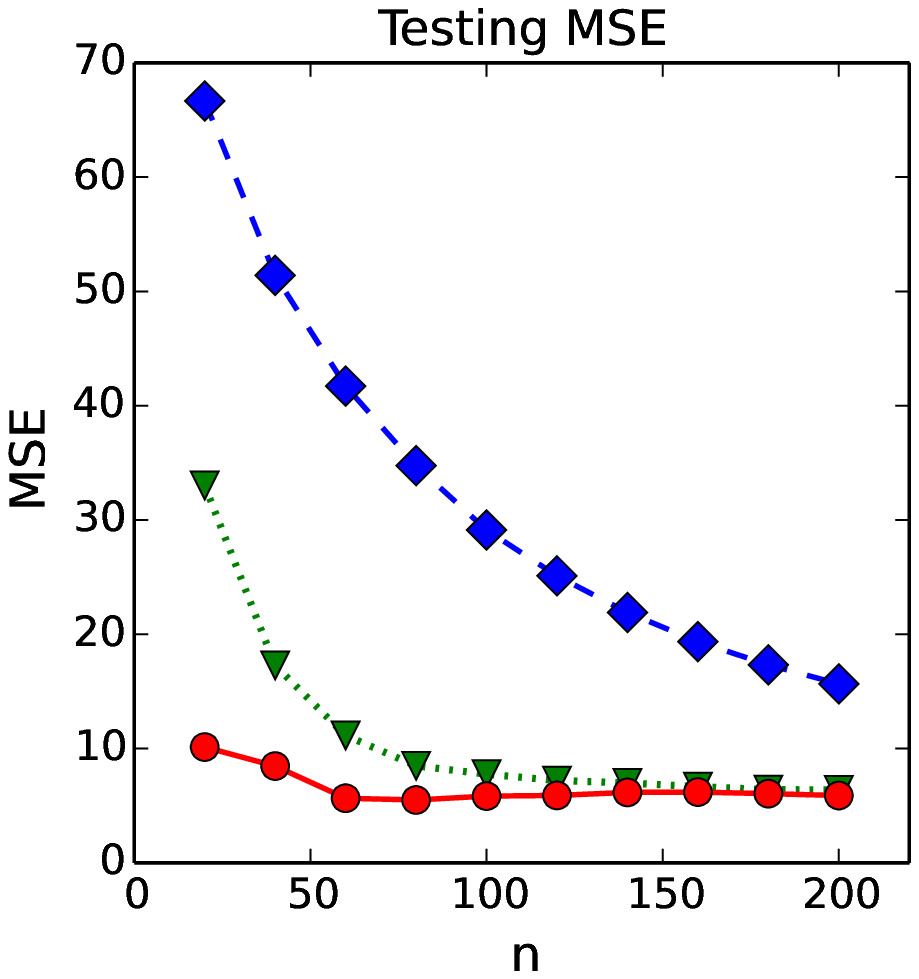}
  \end{center}
\end{minipage}
\caption{Epidemic spreading experiment: Prediction performance.}
  \label{exp_net}
\end{figure}

%%% Local Variables:
%%% mode: latex
%%% TeX-master: "example_paper.tex"
%%% End:

%\input{7_discussion}
\section{Conclusion and Discussion}

We have proposed AMNR, a new nonparametric model for the tensor regression problem.
%By incorporating the low-rank decomposition, 
We constructed the regression function as the sum-product of the local functions,
%which mitigated the "curse of dimensionality". 
%In addition,
and developed an estimation method of AMNR. %, i.e., a Bayesian method using the GP prior.
We have verified our theoretical analysis and demonstrated that AMNR was the most accurate method for predicting the spread of an epidemic in a real network.

The most important limitation of AMNR is the computational complexity, which is better than TGP but worse than TLR. The time complexity of AMNR with the GP estimator
is $O(nR\prod_k I_k + M (nR)^3 \sum_k I_k )$, where CP decomposition
requires $O(R\prod_k I_k)$~\cite{Anh2013} for $n$ inputs and the GP
prior requires $O(I_k (nR)^3)$ for the $MK$ local functions. On the
contrary, TGP requires $O(n^3 \prod_k I_k)$
computation because it must evaluate all the elements of $X$ to
construct the kernel Gram matrix.
When $R \ll n^2$ and $MR^3\sum_k I_k \ll \prod_k I_k$, which are
satisfied in many practical situations, the proposed method is more efficient
than TGP.

Approximation methods
for GP regression can be used to reduce the computational burden of AMNR. For example, \citet{williams2001using} proposed the Nystr{\"o}m method,
which approximates the kernel Gram matrix by a low-rank matrix. If we
apply rank-$L$ approximation, the computational cost of AMNR can be
reduced to $O((L^3 + nL^2)\prod_k I_k)$.

%%% Local Variables:
%%% mode: latex
%%% TeX-master: "nips2015.tex"
%%% End:

%==== Add Part
\newpage
\appendix

\section{Proof of Theorem~\ref{thm:converge}}
Here, we describe the detail and proof of Theorem 1. At the beginning, we
introduce a general theory for evaluating the convergence of a
Bayesian estimator.

Preliminary, we introduce some theorems from previous studies. Let
$P_0$ be a true distribution of $X$, $K(f,g)$ be the Kullback-Leibler
divergence, and define $V(f,g) = \int (\log (f/g))^2 f dx$. Let $d$ be
the Hellinger distance, $N(\epsilon,\mathcal{P},d)$ be the bracketing
number, and $D(\epsilon,\mathcal{P},d)$ be the packing number. Also we
consider a reproducing kernel Hilbert space (RKHS), which is a closure
of linear space spanned by a kernel function.
%, and it equips an innerproduct. 
Denote by $\mathcal{H}^{(k)}$ the RKHS on
$\mathcal{X}^{(k)}$.

The following theorem provides a novel tool to evaluate the Bayesian
estimator by posterior contraction.

\begin{them}[Theorem 2.1 in \cite{GGvdV2000}] 
  Consider a posterior distribution $\Pi_n(\cdot | D_n)$ on a set
  $\mathcal{P}$.  Let $\epsilon_n$ be a sequence such that $\epsilon_n
  \rightarrow 0$ and $n \epsilon_n^2 \rightarrow \infty$.  Suppose
  that, for a constant $C > 0$ and sets $\mathcal{P}_n \subset
  \mathcal{P}$, we have
	\begin{enumerate}
		\item $\log D(\epsilon_n,\mathcal{P}_n,d) \leq n \epsilon_n^2$,
		\item $\Pi_n(\mathcal{P}_n \backslash \mathcal{P}) \leq \exp(-n \epsilon_n^2 (C + 4))$,
		\item $\Pi_n \left( p : -K(p,p_0) \leq \epsilon_n^2, V(p,p_0) \leq \epsilon_n^2 \right) \geq \exp(-Cn\epsilon_n^2)$.
	\end{enumerate}
	Then, for sufficiently large $C'$, $E \Pi_n(P:d(P,P_0) \geq C' \epsilon_n)|D_n) \rightarrow 0$.	
\end{them}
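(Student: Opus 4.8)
The plan is to use the classical ``prior mass and testing'' recipe, whose two ingredients are exactly the prior-concentration bound (Condition 3) and a metric-entropy bound on a sieve (Conditions 1 and 2). Writing the posterior mass of the bad set $A_n := \{P : d(P,P_0) \geq C'\epsilon_n\}$ as a ratio of integrated likelihood ratios,
\begin{align*}
  \Pi_n(A_n \mid D_n)
  = \frac{\int_{A_n} \prod_{i=1}^n \frac{p}{p_0}(X_i)\, d\Pi_n(p)}
         {\int_{\mathcal{P}} \prod_{i=1}^n \frac{p}{p_0}(X_i)\, d\Pi_n(p)},
\end{align*}
and denoting the denominator by $\mathcal{Z}_n$, I would bound $\mathcal{Z}_n$ from below and the numerator from above, each on a $P_0$-event of probability tending to one, so that the ratio vanishes in $P_0$-expectation.

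For the denominator, I would restrict the integral to the Kullback--Leibler-type neighborhood $B_n$ of Condition 3 and apply Jensen's inequality to the concave exponential, giving
\begin{align*}
  \mathcal{Z}_n \geq \Pi_n(B_n)\,
  \exp\!\Big(\tfrac{1}{\Pi_n(B_n)}\!\int_{B_n}\!\sum_{i=1}^n \log\tfrac{p}{p_0}(X_i)\, d\Pi_n(p)\Big).
\end{align*}
The exponent is a sum with $P_0$-mean at least $-n\epsilon_n^2$ (by the $K$-bound) and, after a further Jensen step exchanging the square with $\int d\Pi_n$, variance at most $n\epsilon_n^2$ (by the $V$-bound), so Chebyshev's inequality shows it exceeds $-2n\epsilon_n^2$ with probability tending to one. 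Combined with $\Pi_n(B_n) \geq e^{-Cn\epsilon_n^2}$, this yields $\mathcal{Z}_n \geq e^{-(C+2)n\epsilon_n^2}$ on a high-probability event.

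For the numerator I would split $A_n$ into $A_n \cap \mathcal{P}_n$ and $A_n \setminus \mathcal{P}_n$. On the second piece, Fubini together with the identity $E_{P_0}\prod_i \frac{p}{p_0}(X_i) = 1$ gives expected contribution at most $\Pi_n(\mathcal{P}\setminus\mathcal{P}_n) \leq e^{-(C+4)n\epsilon_n^2}$ by Condition 2. On the first piece I would construct a test $\phi_n$ by Le Cam's theory: the packing bound of Condition 1 covers $A_n \cap \mathcal{P}_n$ by at most $e^{n\epsilon_n^2}$ Hellinger balls, on each of which the minimax testing bound provides a test with both error probabilities at most $e^{-c(C')^2 n\epsilon_n^2}$; taking $\phi_n$ as the maximum of these tests and applying a union bound gives $E_{P_0}\phi_n \leq e^{n\epsilon_n^2}e^{-c(C')^2 n\epsilon_n^2}$ and $\sup_{p\in A_n\cap\mathcal{P}_n} E_p(1-\phi_n) \leq e^{-c(C')^2 n\epsilon_n^2}$. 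Writing $\Pi_n(A_n\mid D_n) \leq \phi_n + (1-\phi_n)\Pi_n(A_n\mid D_n)$ and bounding the second term through $E_{P_0}\big[(1-\phi_n)\int_{A_n\cap\mathcal{P}_n}\prod_i\frac{p}{p_0}\,d\Pi_n\big] = \int_{A_n\cap\mathcal{P}_n} E_p(1-\phi_n)\,d\Pi_n \leq e^{-c(C')^2 n\epsilon_n^2}$ controls the sieve part.

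Finally I would assemble the pieces: on the event $\{\mathcal{Z}_n \geq e^{-(C+2)n\epsilon_n^2}\}$, dividing the numerator bounds by $\mathcal{Z}_n$ produces the factors $e^{(C+2)n\epsilon_n^2}\big(e^{-c(C')^2 n\epsilon_n^2} + e^{-(C+4)n\epsilon_n^2}\big)$, to which I add $E_{P_0}\phi_n$ and the negligible probability of the complementary denominator event. Choosing $C'$ so large that $c(C')^2 > C+2$ forces every exponent to be a negative multiple of $n\epsilon_n^2$, which tends to $-\infty$ since $n\epsilon_n^2 \to \infty$; hence $E_{P_0}\Pi_n(A_n\mid D_n) \to 0$. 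The main obstacle is the entropy-to-testing step: one must produce a single test whose exponential type-II rate $c(C')^2$ strictly dominates both the metric-entropy exponent (normalized to $1$ by Condition 1) and the denominator loss $C+2$, which hinges on the quadratic dependence of Le Cam's Hellinger testing bound on the separation $C'\epsilon_n$ and on choosing $C'$ large enough to reconcile the prior-mass constant $C$ with the testing constant.
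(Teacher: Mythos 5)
You should note at the outset that the paper contains no proof of this statement: it is imported verbatim as Theorem 2.1 of \cite{GGvdV2000} and used purely as a black box in the appendix, so the only meaningful comparison is with the original proof of Ghosal, Ghosh and van der Vaart. Measured against that, your reconstruction is the canonical ``prior mass plus testing'' argument and is correct in outline. Your Jensen-plus-Chebyshev lower bound $\mathcal{Z}_n \geq e^{-(C+2)n\epsilon_n^2}$, holding with probability $1-O(1/(n\epsilon_n^2)) \to 1$, is exactly their Lemma 8.1; the Fubini bound on the sieve complement explains why the constant $C+4$ appears in Condition 2 (after dividing by the denominator it leaves the margin $e^{-2n\epsilon_n^2}$); and the entropy-to-test step is the Le Cam--Birg\'e construction, where convexity of Hellinger balls supplies a test against each ball with both error probabilities of order $e^{-K(C')^2 n\epsilon_n^2}$, and a maximal $\epsilon_n$-packing (hence $\epsilon_n$-covering) of $A_n\cap\mathcal{P}_n$ permits the union bound $E_{P_0}\phi_n \leq e^{(1-K(C')^2)n\epsilon_n^2}$. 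Two minor remarks. First, you tacitly corrected two typos in the paper's statement---Condition 2 should read $\Pi_n(\mathcal{P}\setminus\mathcal{P}_n)$, and Condition 3 defines the Kullback--Leibler neighborhood $\{p : K(p_0,p)\leq\epsilon_n^2,\ V(p_0,p)\leq\epsilon_n^2\}$---and your argument correctly uses the corrected versions, as it must. Second, to make the type-II bound uniform over $A_n\cap\mathcal{P}_n$ you should either argue over the shells $\{p : jC'\epsilon_n \leq d(p,p_0) < (j+1)C'\epsilon_n\}$ as in the original, or verify that each alternative lies within a ball of radius $\epsilon_n \leq C'\epsilon_n/2$ whose center is at distance at least $C'\epsilon_n$ from $P_0$; with that detail filled in and $C'$ chosen so the testing exponent dominates both the entropy exponent $1$ and the denominator loss $C+2$, every term is exponentially small in $n\epsilon_n^2$ except the $O(1/(n\epsilon_n^2))$ denominator event, and the conclusion follows since $n\epsilon_n^2\to\infty$.
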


Based on the theorem, \cite{vdVvZ2008} provide a more useful result
for the Bayesian estimator with the GP prior.  They consider the
estimator for an infinite dimensional parameter with the GP prior and
investigated the posterior contraction of the estimator with GP.  They
provide the following conditions.
\paragraph{Condition (A)}
With some Banach space
$(\mathcal{B},\|\cdot\|)$ and RKHS $(\mathcal{H},\|\cdot\|)$, 
\begin{enumerate}
	\item $\log N(\epsilon_n,B_n,\|\cdot\|) \leq C n \epsilon_n^2$,
	\item $\mbox{Pr}(W \notin B_n) \leq \exp(-Cn \epsilon_n^2 )$,
	\item $\mbox{Pr}\left(\|W - w_0\| < 2 \epsilon_n \right) \geq \exp(-Cn\epsilon_n^2)$,
\end{enumerate}
where $W$ is a random element in $\mathcal{B}$ and $w_0$ is a true
function in support of $W$. 

When the estimator with the GP prior
satisfied the above conditions, the posterior contraction of the
estimator is obtained as Theorem 2.1 in \cite{GGvdV2000}.

Based on this, we obtain the following result.  Consider a set of GP
$\{ \{W_{m,x}^{(k)}:x \in \mathcal{X}^{(k)}\}\}_{m=1,\ldots,M,k =
  1,\ldots,K}$ and let $\{F_x:x \in \mathcal{X}\}$ be a stochastic
process that satisfies $F_x = \sum_m \sum_r \lambda_r \prod_k
W_{m,x_r^{(k)}}^{(k)}$.

% \knote{$F$ is not defined}.  
Also we assume
that there exists a true function $f_0$ which is constituted by a
unique set of local functions $\{w_{m,0}^{(k)}\}_{m=1,\ldots,M,k =
  1,\ldots,K}$.

To describe posterior contraction, we define a contraction rate
$\epsilon_n^{(k)}$. It converges to zero as $n \rightarrow
\infty$. Let $\phi^{(k)}(\epsilon)$ be a concentration function such that
\begin{align*}
	\phi^{(k)}(\epsilon) := \inf_{h \in \mathcal{H}^{(k)}:\|h-w_0\|<\epsilon}\|h\|_{\mathcal{H}^{(k)}}^2 - \log \mbox{Pr}(\|W^{(k)}\|<\epsilon),
\end{align*}
where $\|\cdot\|_{\mathcal{H}^{(k)}}$ is the norm induced by the inner
product of RKHS. 
%\knote{$\epsilon_n^{(k)}$ is already defined} 
We define the contraction rate with $\phi^{(k)}(\epsilon)$.
We denote a sequence $\{\epsilon_n^{(k)}\}_{n,k}$
satisfying
\begin{align*}
	\phi^{(k)}(\epsilon_n^{(k)}) \leq n(\epsilon_n^{(k)})^2.
\end{align*}
The order of $\epsilon_n^{(k)}$ depends on a choice of kernel
function, where the optimal minimax rate is $\epsilon_n^{(k)} =
O(n^{-\beta_k / (\beta_k + I_k)})$~\cite{tsybakov2008}. In the
following part, we set $\tilde{\epsilon_n}^{(k)} = \tilde{\epsilon}_n$
for every $k$. As $k' = \arg \max_k I_k$, the $\epsilon_n^{(k)}$
satisfies the condition about the concentration function.

We also note the relation between posterior contraction and the well-known risk bound.  Suppose the
posterior contraction such that $E \Pi_n(\theta : d_n^2(\theta,
\theta_0) \geq C \epsilon_n^2|D_n) \rightarrow 0$ holds, where
$\theta$ is a parameter, $\theta_0$ is a true value, and $d_n$ is a
bounded metric.  Then we obtain the following inequality:
\begin{align*}
	E \Pi_n (d_n^2(\theta, \theta_0)|D_n) \leq C \epsilon_n^2 + DE \Pi_n(\theta : d_n^2(\theta, \theta_0) \geq C \epsilon_n^2|D_n),
\end{align*}
where $D$ is a bound of $d_n$. This leads $E \Pi_n (d_n^2(\theta,
\theta_0)|D_n) = O(C \epsilon_n^2)$.  In addition, if $\theta \mapsto
d_n^2(\theta,\theta_0)$ is convex, the Jensen's inequality provides
$d_n^2(\theta,\theta_0) \leq \Pi_n (d_n^2(\theta, \theta_0)|D_n)$.  By
taking the expectation, we obtain
\begin{align*}
	Ed_n^2(\theta,\theta_0) \leq E \Pi_n (d_n^2(\theta, \theta_0)|D_n) = O(C \epsilon_n^2).
\end{align*}

\if0
\begin{them}
	Let $\{W_m^{(k)}\}_{m,k}$ is a set of tight Gaussian zero-mean element in $l^{\infty}(\prod _k \mathcal{X}^{(k)})$, and denote $F = \sum_{m=1}^M \prod_{k=1}^K W_{m}^{(k)}$.
	Consider a function $f_0 \in l^{\infty}(\prod_k \mathcal{X}^{(k)})$ and model $y = f_0 + \epsilon$, where $\epsilon$ is a Gaussian variable.	
	Suppose there exist a unique set of the elements $\{w_{0,m}^{(k)}\}_{m,k}$ in support of $l^{\infty}(\prod_k \mathcal{X}^{(k)})$ for each $m$ and $k$, and it satisfies $f_0 = \sum_{m=1}^M \prod_{k=1}^K w_{0,m}^{(k)}$. 
	Consider a prior for $\sigma^2$ has a bounded support. Also denote $\tilde{\epsilon}_n := \max_k \epsilon_n^{(k)}$.
	
	Then we obtain with sufficient large constant $C$, 
	\begin{align*}
		E_0 \Pi_n (\|f - f_0 \|_n + |\hat{\sigma} - \sigma| > C \tilde{\epsilon}_n | Y_1,\ldots,Y_n) \rightarrow 0.
	\end{align*}	
\end{them}
\fi

We start to prove Theorem \ref{thm:converge}. First, we provide a lemma for
functional decomposition.  When the function has a form of a
$k$-product of local functions, we bound a distance between two
functions with a $k$-sum of distance by the local functions.
\begin{lem} \label{lem:decomp_multi} Suppose that two functions
  $f,g:\times_{k=1}^K \mathcal{X}_k \rightarrow \mathbb{R}$ have a
  form $f = \prod_k f_k$ and $g = \prod_k g_k$ with local functions
  $f_k,g_k:\mathcal{X}_k \rightarrow \mathbb{R}$. Then we have a bound
  such that
	\begin{align*}
		\left\| f - g \right\| \leq \sum_k \left\| f_k - g_k \right\| \max\left\{ \left\| f_k \right\|,\left\| g_k \right\| \right\}.
	\end{align*}
\end{lem}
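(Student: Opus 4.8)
The plan is to prove this by a telescoping decomposition of the difference of products, combined with the multiplicative factorization of the $L^2$ norm over the product domain $\times_{k=1}^K\mathcal{X}_k$. First I would record the factorization property: for any product function $h=\prod_k h_k$ with $h_k\in L^2(\mathcal{X}_k)$, Fubini's theorem gives
\[
	\|h\|^2 = \int_{\times_k\mathcal{X}_k}\prod_{k=1}^K h_k(x_k)^2\,dx_1\cdots dx_K = \prod_{k=1}^K\int_{\mathcal{X}_k}h_k(x_k)^2\,dx_k = \prod_{k=1}^K\|h_k\|^2,
\]
so that $\bigl\|\prod_k h_k\bigr\| = \prod_k\|h_k\|$. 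Since $f=\prod_k f_k$ and $g=\prod_k g_k$ have exactly this form, the norm of any product built from the $f_k$ and $g_k$ can be read off as the product of the one-dimensional norms.

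Next I would invoke the telescoping identity for a difference of products,
\[
	\prod_{k=1}^K f_k - \prod_{k=1}^K g_k = \sum_{j=1}^K\Bigl(\prod_{k<j} f_k\Bigr)(f_j-g_j)\Bigl(\prod_{k>j} g_k\Bigr),
\]
which is verified directly by collapsing the sum (the contribution $f_1\cdots f_{j-1}g_j\cdots g_K$ of term $j$ cancels against the leading part of term $j-1$). Each summand is again a product function, built from $f_1,\dots,f_{j-1}$, the factor $(f_j-g_j)$, and $g_{j+1},\dots,g_K$, so the factorization above gives its norm as $\bigl(\prod_{k<j}\|f_k\|\bigr)\|f_j-g_j\|\bigl(\prod_{k>j}\|g_k\|\bigr)$. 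Applying the triangle inequality to the telescoped sum then yields
\[
	\|f-g\| \leq \sum_{j=1}^K\Bigl(\prod_{k<j}\|f_k\|\Bigr)\|f_j-g_j\|\Bigl(\prod_{k>j}\|g_k\|\Bigr).
\]

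Finally I would bound the surviving cross factors using $\|f_k\|\leq\max\{\|f_k\|,\|g_k\|\}$ and $\|g_k\|\leq\max\{\|f_k\|,\|g_k\|\}$, so that the coefficient of $\|f_j-g_j\|$ is at most $\prod_{k\neq j}\max\{\|f_k\|,\|g_k\|\}$, which delivers the stated bound. The main obstacle, and the step to treat with care, is precisely this last one: the telescoping argument naturally produces the \emph{product} $\prod_{k\neq j}\max\{\|f_k\|,\|g_k\|\}$ of the remaining factors rather than a single $\max\{\|f_k\|,\|g_k\|\}$, and a small scalar check shows that the lone-$\max$ form is not valid for arbitrary factor norms. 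I would therefore read the statement under the paper's normalization inherited from the CP decomposition, in which each local factor is unit-normalized so that $\prod_{k\neq j}\max\{\|f_k\|,\|g_k\|\}=1$ and the two forms coincide, or else retain the slightly stronger product form, which is what the telescoping estimate actually establishes and which suffices for the downstream use of the lemma in decomposing $\|f^*-\hat f_n\|$ into per-order contributions.
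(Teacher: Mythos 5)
Your proof is correct and takes essentially the same route as the paper: the paper argues by induction on $K$, splitting $f_1f_2 - g_1g_2 = f_1(f_2-g_2) + (f_1-g_1)g_2$ and recursing, which when unrolled is exactly your telescoping identity; both arguments then finish with the triangle inequality and the multiplicativity of the norm over the product domain, which the paper uses silently and you justify explicitly via Fubini. Your closing caveat is also a genuine and correct catch: the bound as literally stated in the lemma, with a single factor $\max\{\|f_k\|,\|g_k\|\}$ attached to $\|f_k-g_k\|$, is false for arbitrary factor norms (e.g.\ scalars with $K=3$, $f_1=f_2=f_3=c$, $g_1=c+1$, $g_2=g_3=c$: the left side is $c^2$ while the claimed bound is $c+1$), and what both your telescoping and the paper's induction actually establish is the product form $\sum_j \|f_j-g_j\| \prod_{k\neq j}\max\{\|f_k\|,\|g_k\|\}$. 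Notably, when the paper invokes the lemma inside the proof of Theorem~\ref{thm:converge} it writes precisely this product form, $\prod_{k'\neq k}\max\{\|w_{m,0}^{(k')}\|,\|W_m^{(k')}\|\}$, and then absorbs that product into the constant $C_k$; so the lemma statement should be read as the product-form bound (or under unit-normalized factors, where the two coincide), exactly as you suggest.
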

\begin{proof}
	We show the result based on induction. When $k=2$, we have
	\begin{align*}
		f - g = f_1 f_2 - g_1 g_2 = f_1 (f_2  - g_2) + (f_1 - g_1) g_2,
	\end{align*}
	and
	\begin{align*}
		\left\| f - g \right\| \leq \left\| f_1 \right\| \left\| f_2  - g_2 \right\| + \left\| f_1 - g_1 \right\| \left\|   g_2 \right\|.
	\end{align*}
	Thus the result holds when $k=2$.
	
	Assume the result holds when $k = k'$. Let $k = k'+1$. The
        difference between $k'+1$-product functions is written as
	\begin{align*}
		f - g = f_{k'+1} \prod_{k'} f_k - g_{k'+1} \prod_{k'} g_k = \prod_{k'}f_k (f_{k'+1} - g_{k'+1}) + (\prod_{k'} f_k - \prod_{k'} g_k) g_{k'+1}.
	\end{align*}
	From this, we obtain the bound
	\begin{align*}
		\left\| f - g \right\| \leq \left\| \prod_{k'}f_k \right\| \left\| f_{k'+1} - g_{k'+1} \right\| + \left\| \prod_{k'} f_k - \prod_{k'} g_k \right\| \left\|   g_{k'+1} \right\|.
	\end{align*}
	The distance $\left\| \prod_{k'} f_k - \prod_{k'} g_k
        \right\|$ is decomposed recursively by the case of $k =
        k'$. Then we obtain the result.
\end{proof}

Now we provide the proof of Theorem \ref{thm:converge}. Note that $M=M^* < \infty$ in
the statement in Theorem \ref{thm:converge}. Asume that $C,C',C'',\ldots$
are some positive finite
constants and they are not affected by other values.

\begin{proof}
  We will show that $F_x$ satisfies the condition (A).
  %\knote{Just refer the above result written in this document}. 
  Firstly, we check the third condition in the
  theorem.
	
	According to Lemma~\ref{lem:decomp_multi}, the value $\|F_x-f_0\|$ is bounded as
	\begin{align*}
		\|F_x-f_0\| &\leq \sum_m \sum_r \left\| \prod_k W_r^{(k)} - \prod_k w_{m,0}^{(k)} \right\| \\
		&\leq \sum_m \sum_r \sum_k \left\|  W_m^{(k)} - w_{m,0}^{(k)} \right\| \prod_{k' \neq k} \max \left\{ \left\|  w_{m,0}^{(k')} \right\|,\left\|  W_m^{(k')} \right\| \right\}.
	\end{align*}
	By denoting $\left\|\tilde{W}_m^{(k')} \right\| := \max
        \left\{ \left\| w_{m,0}^{(k')} \right\|,\left\| W_m^{(k')}
          \right\| \right\}$, we evaluate the probability $Pr(\|F -
        f_0\|\leq \epsilon_n)$ as
\begin{align}
	\mbox{Pr}(\|F - f_0\|\leq \epsilon_n) &\geq \mbox{Pr}\left(\sum_m \sum_r \sum_k \left\|  W_m^{(k)} -  w_{m,0}^{(k)} \right\| \prod_{k' \neq k}\left\|  \tilde{W}_m^{(k')} \right\|\leq \epsilon_n \right) \notag\\
%	&\geq \prod_r \mbox{Pr}\left( \sum_m \sum_k \left\|  W_m^{(k)} -  w_{m,0}^{(k)} \right\| \prod_{k' \neq k}\left\|  \tilde{W}_m^{(k')} \right\|\leq \frac{1}{R}\epsilon_n \right) \notag\\
%	&\geq \prod_r \prod_k \mbox{Pr}\left( \sum_m \left\|  W_m^{(k)} -  w_{m,0}^{(k)} \right\| \prod_{k' \neq k}\left\|  \tilde{W}_m^{(k')} \right\|\leq \frac{1}{RK}\epsilon_n \right)  \notag\\
	&\geq  \mbox{Pr}\left( \sum_r \sum_k \sum_m \left\|  W_m^{(k)} -  w_{m,0}^{(k)} \right\| \leq \frac{1}{C_{k}}\epsilon_n \right), \label{bound_3rd_cond}
\end{align}
where $C_{k}$ is a positive finite constant satisfying $C_k = \max_{m}
\prod_k \left\| \tilde{W}_m^{(k')} \right\|$. 

From \cite{vdVvZ2008}, we use the following inequality for every Gaussian random element $W$:
\begin{align*}
	\mbox{Pr}\left( \|W - w_0\| \leq \epsilon_n \right) \geq \exp(-n\epsilon_n^2),
\end{align*}
Then, by seting $\epsilon_n = \sum_{m,r,k} \epsilon_n^{(k)}$ and with some constant $C$, we bound
(\ref{bound_3rd_cond}) below as
\begin{align*}
	&\mbox{Pr}\left( \sum_r \sum_k \sum_m \left\|  W_m^{(k)} -  w_{m,0}^{(k)} \right\| \leq \frac{1}{C_{k}}\sum_{m,r,k} \epsilon_n^{(k)} \right) \\
	& \geq \prod_{m,r,k} \mbox{Pr}\left( \left\|  W_m^{(k)} -  w_{m,0}^{(k)} \right\| \leq \frac{1}{C_{k}} \epsilon_n^{(k)} \right) \\
	&\geq  \prod_m \prod_r \prod_k \exp\left(-\frac{n}{\left(C_{k}\right)^2 } \epsilon_n^{(k),2}\right) \\
	&\geq  \exp\left(-n\sum_{m,r,k}\epsilon_n^{(k),2}\right).
\end{align*}

For the second condition, we define a subspace of the Banach space as
\begin{align*}
	B_n^{(k)} = \epsilon_n^{(k)} \mathcal{B}_1^{(k)} + M_n^{(k)} \mathcal{H}_1^{(k)},
\end{align*}
for all $k = 1,\ldots,K$. Note $\mathcal{B}_1^{(k)}$ and $\mathcal{H}_1^{(k)}$ are unit balls in $\mathcal{B}$ and $\mathcal{H}$. Also, we define $B_n$ as
\begin{align*}
	B_n := \left\{ w : w = MR\prod_k w_k,  w_k \in B_n^{(k)} ,\forall k\right\}.
\end{align*}

\if0
Probability $\mbox{Pr}(F \in B_n)$ is bounded as
\begin{align*}
	Pr(F \in B_n) &\geq \prod_m \prod_r \mbox{Pr}\left(\prod_k W_m^{(k)} \in B_n \right) \\
	&= \prod_m \prod_k \prod_r \mbox{Pr}\left( W_m^{(k)} \in B_n^{(k)} \right).
\end{align*}
The first inequality is because each $W_m^{(k)}$ are independent, and the first equality is from the definition of $B_n$.
\fi

As shown in \cite{vdVvZ2008}, for every $r$ and $k$, 
\begin{align*}
	Pr\left( W_m^{(k)} \notin B_n^{(k)} \right)\leq 1- \Phi(\alpha_n^{(k)} + M_n^{(k)}),
\end{align*}
where $\Phi$ is the cumulative distribution function of the standard
Gaussian distribution;
%\knote{Gauss and normal are both used in this document. Should
%  be consistent. Use Gauss only, or use normal only.} 
$\alpha_n^{(k)}$ and $M_n^{(k)}$ satisfy the following equation with a
constant $C' > 0$ as
\begin{align*}
	&\alpha_n^{(k)} = \Phi^{-1}(Pr(W_m^{(k)} \in \epsilon_n \mathcal{B}_1^{(k)})) = \Phi^{-1}(\exp(-\phi_0(\epsilon_n^{(k)}))),\\
	&M_n^{(k)} = -2 \Phi^{-1}(\exp(-C'n(\epsilon_n^{(k)})^2)).
\end{align*}
By setting $\alpha_n^{(k)} + M_n^{(k)} \geq \frac{1}{2}M_n^{(k)}$ and
using the relation $\phi_0(\epsilon) \leq n \epsilon_n^2$, we have
\begin{align*}
	Pr\left( W_m^{(k)} \notin B_n^{(k)} \right)\leq 1- \Phi\left(\frac{1}{2}M_n^{(k)}\right) = \exp(-C'n(\epsilon_n^{(k)})^2).
\end{align*}
This leads
\begin{align*}
	Pr(F_x \notin B_n) &\leq \prod_m \prod_k \prod_r \mbox{Pr}( W_m^{(k)} \notin B_n^{(k)}) \\
	&\leq \prod_m \prod_k \prod_r \exp(-C'n(\epsilon_n^{(k)})^2)\\
	&= \exp(-C'\sum_{m,r,k}(\epsilon_n^{(k)})^2).
\end{align*}

Finally, we show the first condition. Let
$\{h_j^{(k)}\}_{j=1}^{N^{(k)}}$ be a set of elements of $M_n^{(k)}
\mathcal{H}_1^{(k)}$ for all $k$. Also, we set that each $h_j^{(k)}$
are $2 \epsilon_n^{(k)}$ separated, thus $\epsilon_n^{(k)}$ balls
with center $h_j^{(k)}$ do not have intersections.
% \knote{This sentence is unclear.}. 
According to Section 5 in \cite{vdVvZ2008}, we have
\begin{align*}
	1 &\geq \sum_{j=1}^{N^{(k)}}\mbox{Pr}(W_m^{(k)} \in h_j^{(k)} + \epsilon_n B_1^{(k)}) \\
	& \geq \sum_{j=1}^{N^{(k)}} \exp(-\frac{1}{2}\|h_j^{(k)}\|_{\mathcal{H}}^2)Pr(W \in \epsilon_n^{(k)} \mathcal{B}_1^{(k)}) \\
	& \geq N^{(k)} \exp\left(-\frac{1}{2}(M_n^{(k)})^2\right)\exp(-\phi_0(\epsilon_n^{(k)})).
\end{align*}
Consider $2 \epsilon_n^{(k)}$-nets with center $\{h_j^{(k)}\}_{j=1}^{N^{(k)}}$.
%\knote{$\{h_j^{(k)}\}_{j=1}^{N^{(k)}}$ is $2 \epsilon_n^{(k)}$-nets?}
The nets cover $M_n^{(k)} \mathcal{H}_1^{(k)}$, we obtain
\begin{align*}
	N(2 \epsilon_n^{(k)}, M_n^{(k)} \mathcal{H}_1^{(k)},\|\cdot\|) \leq N^{(k)} \leq \exp\left(\frac{1}{2}(M_n^{(k)})^2\right)\exp(\phi^{(k)}(\epsilon_n^{(k)})).
\end{align*}
Because every point in $B_n^{(k)}$ is within $\epsilon_n^{(k)}$ from some point of $M_n \mathcal{H}_1^{(k)}$, we have
\begin{align*}
	N(3 \epsilon_n^{(k)}, B_n^{(k)},\|\cdot\|) \leq N(2 \epsilon_n^{(k)}, M_n^{(k)} \mathcal{H}_1^{(k)},\|\cdot\|).
\end{align*}

By Lemma \ref{lem:decomp_multi}, for every elements $w,w' \in B_n$
constructed as $w = \prod _k w^{(k)}, w^{(k)} \in B_n^{(k)}$, its
distance is evaluated as
\begin{align}
	\|w - w'\| &= \left\| \prod_k w_k  -  \prod_k w'_k\right\| \notag\\
	& \leq \sum_k \left\|w^{(k)} - w'^{(k)}\right\| \prod_{k' \neq k}\left\|\tilde{w}^{(k')}\right\| \notag\\
	& \leq \prod_{k' \neq k} C_{k'} \sum_k \left\|w^{(k)} - w'^{(k)}\right\|. \label{ineq1}
\end{align}
We consider a set $\{h^*: h^* = \prod_{k,j} h_j^{(k)} \}$, which are the element of $B_n$.
% \knote{This sentence may incomplete}. 
According to \eqref{ineq1},
the $C \epsilon_n$-net with center $\{h^*\}$ will cover $B_n$, and its
number is equal to $\prod_k N(\epsilon_n,B_n^{(k)},\|\cdot\|)$. Let
$C\sum_{m,r,k} \epsilon_n^{(k)} =: \epsilon_n'$ and we have
\begin{align*}
	\log N(3\epsilon'_n,B_n,\|\cdot\|) &\leq \sum_{m,r,k} \log N(3\epsilon_n^{(k)},B_n^{(k)},\|\cdot\|) \\
%	&\leq \sum_k \log N(3\epsilon_n^{(k)},B_n^{(k)},\|\cdot\|) \\
	& \leq \sum_{m,r,k} \log N(2 \epsilon^{(k)}_n, M_n^{(k)} \mathcal{H}_1^{(k)},\|\cdot\|) \\
	& \leq \sum_{m,r,k} \left( \frac{1}{2}(M_n^{(k)})^2 + \phi^{(k)}(\epsilon^{(k)}_n) \right) \\
	& \leq \sum_{m,r,k} \left( C'' n (\epsilon^{(k)}_n)^2 + C''' n (\epsilon_n^{(k)})^2 \right) \\
	& \leq C'''' n\sum_{m,r,k} (\epsilon^{(k)}_n)^2.
\end{align*}
The last inequality is from the definition of $M_n$ and
$\phi^{(k)}(\epsilon_n)$. 

We check that the conditions (A) are all satisfied, thus we obtain
posterior contraction of the GP estimator with rate $\epsilon^{(k)}_n$.
Also, according to the connection between posterior contraction and the risk bound,
we achieve the result of Theorem \ref{thm:converge}.

%About estimation of $\sigma$,
%according to \cite{GvdV2007}, we obtain its convergence rate is faster
%than $\epsilon_n^{(k)}$ \knote{The meaning of this sentence is unclear.}.

\end{proof}

%%% Local Variables:
%%% mode: latex
%%% TeX-master: "appendix.tex"
%%% End:

\section{Proof of Theorem \ref{thm:high_model}}

We define the representation for the true function.  Recall the
notation $f = \sum_m \bar{f}_m$. We introduce the notation for 
the true function $f^*$ and the GP estimator $\hat{f}$ as follow:
\begin{align*}
  f^* = \sum_{m=1}^{\infty} \bar{f}^*_m \\
  \hat{f}_n = \sum_{m=1}^M \hat{\bar{f}}_m.
\end{align*}

We decompose the above two functions as follows:
\begin{align}
	\|f^* - \hat{f}\|_n &= \left\| \sum_{m=1}^{\infty} \bar{f}^*_m - \sum_{m=1}^M \hat{\bar{f}}_m\right\|_n \notag\\
	&=  \left\| \sum_{m=M+1}^{\infty} \bar{f}_m^* - \sum_{m=1}^M (\bar{f}_m^*  - \hat{\bar{f}}_m)\right\|_n \notag\\
	& \leq \left\| \sum_{m=M+1}^{\infty} \bar{f}_m^* \right\|_n + \left\| \sum_{m=1}^M (\bar{f}_m^*  - \hat{\bar{f}}_m)\right\|_n. \label{ineq2}
\end{align}

Consider the first term with Assumption 1.
The expectation of the first term of \eqref{ineq2} is bounded by
\begin{align*}
	&E\left\| \sum_{m=M+1}^{\infty} \bar{f}_m^* \right\|_n \leq \sum_{m=M+1}^{\infty}\left\| \bar{f}_m^* \right\|_2 \\
	&\leq C \sum_{m=M+1}^{\infty} m^{-\gamma},
\end{align*}
with finite constant $C$.  The exchangeability of the first inequality
is guaranteed by the setting of $f^*$.  Also, the second inequality
comes from Assumption 1.  Then, the infinite summation of
$m^{-\gamma}$ enables us to obtain that the expectation of the first
term is $O(M^{-\gamma})$, by using the relation $\sum_{x=C}^{\infty}x^{-r} \asymp C^{-r}$.
% \knote{It seems there is a logical jump in  this sentence. }.

About the second term of \eqref{ineq2}, we consider the estimation for $f^*$ with finite $M$.  As
shown in the proof of Theorem \ref{thm:converge}, the estimation of $\sum_m^M \bar{f}_m$ is
evaluated as
\begin{align*}
	E \|\hat{f} - f^*\|_2 = O\left(\sum_m^M n^{-\frac{\beta}{2\beta + \max_k I_k}} \right) =  O\left(M n^{-\frac{\beta}{2\beta + \max_k I_k}} \right).
\end{align*}

Finally, we obtain the relation
\begin{align*}
	E\|f^* - \hat{f}\|_n = O(M^{-\gamma}) + O\left(M n^{-\frac{\beta}{2\beta + \max_k I_k}} \right).
\end{align*}
Then, we allow $M$ to increase as $n$ increases.  Let $M \asymp
n^{\zeta}$ with positive constant $\zeta$, and simple calculation
concludes 
%\knote{If you have time, please write down the simple
%  calculation; seems not trivial. } 
that $\zeta = (\frac{\beta}{2\beta
  + \max_k I_k})/(1+\gamma)$ is optimal.  By substituting $\zeta$, we
obtain the result.

%%% Local Variables:
%%% mode: latex
%%% TeX-master: "appendix.tex"
%%% End:

\newpage
\bibliography{./bib_master}
\end{document}